\def\BibTeX{{\rm B\kern-.05em{\sc i\kern-.025em b}\kern-.08em
    T\kern-.1667em\lower.7ex\hbox{E}\kern-.125emX}}
\DeclareMathOperator{\sgn}{sgn}
\DeclareMathOperator{\subjto}{subject\,to}
\DeclareMathOperator{\minimize}{minimize}
\DeclareMathOperator{\trace}{trace}
\DeclareMathOperator{\diag}{diag}
\newcommand{\set}[1]{#1}
\newcommand{\est}[1]{\hat{#1}}
\newcommand{\mat}[1]{\mathbf{#1}}
\renewcommand{\vec}[1]{\underline{#1}} 
\newtheorem{definition}{Definition}
\newtheorem{theorem}{Theorem}
\newtheorem{lemma}{Lemma}
\begin{document}
\title{An Event-Based Approach for the Conservative Compression of Covariance Matrices}
\author{\orcidlink{0000-0001-6268-678X}\,Christopher Funk${}^*$, \orcidlink{0000-0001-8996-5738}\,Benjamin Noack${}^\dagger$
\thanks{Submitted for review on December 8, 2023. © 2024 IEEE. Personal use of this material is permitted. Permission from IEEE must be obtained for all other uses, in any current or future media, including reprinting/republishing this material for advertising or promotional purposes, creating new collective works, for resale or redistribution to servers or lists, or reuse of any copyrighted component of this work in other works.
}
\thanks{${}^*$Christopher Funk is with the Autonomous Multisensor Systems Group, Institute for Intelligent Cooperating Systems, Otto von Guericke University Magdeburg, Germany (e-mail: christopher.funk@ovgu.de). Corresponding author.}
\thanks{${}^\dagger$Benjamin Noack heads the Autonomous Multisensor Systems Group, Institute for Intelligent Cooperating Systems, Otto von Guericke University Magdeburg, Germany (e-mail: benajmin.noack@ieee.org).}}

\maketitle

\begin{abstract}
    This work introduces a flexible and versatile method for the data-efficient yet conservative transmission of covariance matrices, where a matrix element is only transmitted if a so-called triggering condition is satisfied for the element.
    Here, triggering conditions can be parametrized on a per-element basis, applied simultaneously to yield combined triggering conditions or applied only to certain subsets of elements.
    This allows, e.g., to specify transmission accuracies for individual elements or to constrain the bandwidth available for the transmission of subsets of elements.
    Additionally, a methodology for learning triggering condition parameters from an application-specific dataset is presented. 
    The performance of the proposed approach is quantitatively assessed in terms of data reduction and conservativeness using estimate data derived from real-world vehicle trajectories from the InD-dataset, demonstrating substantial data reduction ratios with minimal over-conservativeness. 
    The feasibility of learning triggering condition parameters is demonstrated.
\end{abstract}

\begin{IEEEkeywords}
    Sensor Fusion, Data Compression, Conservativeness, Robust Optimization, Covariance Matrix
\end{IEEEkeywords}

\section{Introduction}
Estimate fusion is a fundamental operation in sensor fusion systems that involves the combination of multiple estimates, each accompanied by its corresponding covariance matrix, to yield an enhanced single estimate. This operation can be conducted either in a centralized manner, where a fusion center combines individual estimates, or in a decentralized fashion without a designated central entity. The significant bandwidth required for transmitting both the estimates and covariance matrices that encode estimate uncertainty to the fusion center or other entities, motivates the use of (lossy) compression techniques, in particular for covariance matrices, as they tend to dominate the amount of transmitted data. 
In addition, in safety-critical applications such as automated driving, the covariance matrices associated with both the original estimates and the fused estimate must not underestimate the true uncertainty, i.e., they must be conservative.
Underestimating the uncertainty, e.g., of a position estimate, could potentially lead to safety hazards and accidents. As such, any lossy compression methods employed must guarantee that the compressed covariance matrices uphold the integrity of the uncertainty encoded in the original covariance matrices.

\begin{figure}
    \centering
    \includegraphics{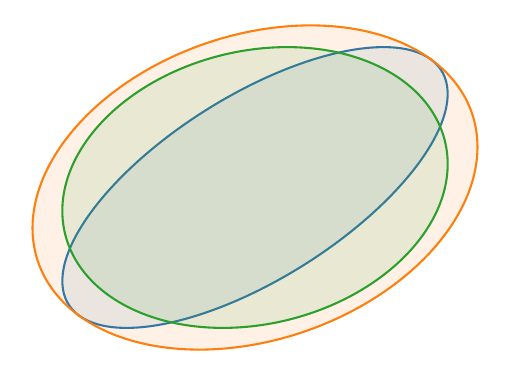}
    \caption{Confidence / credible regions associated with covariance matrix $\mat P$ (blue), covariance matrix $\hat{\mat P}$ conservative w.r.t. $\mat P$ (orange), and covariance matrix $\tilde{\mat P}$ not conservative w.r.t. $\mat P$ (green). As can be seen the region of $\hat{\mat P}$ includes that of $\mat P$ while that of $\tilde{\mat P}$ does not. The latter corresponds to an underestimation of the uncertainty encoded by $\mat P$. \label{fig:cons_condition}}
\end{figure}

To address the issue of compressed transmission of covariance matrices, various techniques have been proposed. Some works have investigated diagonal approximations \cite{forsling_consistent_2019} or elementwise quantization of covariance matrices using diagonal dominance \cite{funk_conservative_2020-1} and modified Cholesky decompositions \cite{funk_conservative_2021-1} to ensure data reduction while preserving conservativeness.
Linear projections combined with the covariance intersection algorithm were used in \cite{forsling_communication_2020} to achieve conservative covariance matrix compression.
In addition, methods to compute the optimal linear projections in terms of the mean square error have been proposed \cite{forsling_optimal_2022}.
Issues regarding the practicality of the aformentioned approaches in decentralized fusion scenarios were remedied in \cite{forsling_decentralized_2023}.
Unrelated to the above, and most relevant for this work, an event-triggered approach \cite{funk_conservative_2023} has emerged as a promising solution, allowing for the selective transmission of covariance information based on its novelty, thereby reducing the transmitted data volume. 
This approach utilizes the implicit information inherent in the decision not to transmit data to ensure effective data compression while maintaining conservativeness.
Works prior to \cite{funk_conservative_2023} have considered the event-triggered transmission of estimates \cite{battistelli_distributed_2018}, measurements \cite{sijs_event_2012,sijs_event-based_2013,han_stochastic_2015}, and system inputs \cite{noack_state_2020} while also leveraging implicit information to improve accuracy or provide error bounds.

This work is an extension of \cite{funk_conservative_2023}, where an event-triggered covariance matrix transmission approach was introduced, which sends individual covariance matrix elements based on a triggering condition being satisfied for the respective element.
Here, the approach is rigorously rederived, its theory generalized and extended, and its experimental evaluation enhanced.
The result is a more flexible and versatile approach for the conservative data-efficient transmission of covariance matrices for sensor fusion.
The contributions of this work are:
\begin{enumerate}
    \item The rigorous rederivation of the approach from \cite{funk_conservative_2023}, thereby generalizing it to allow for different triggering condition parameters for different elements, for the simultaneous use of multiple triggering rules for a single element, and for the application of triggering conditions only to certain subsets of covariance matrix elements.
    \item A new scale-invariant triggering condition, the 'relative-change' trigger, which simplifies the selection of adequate triggering condition parameters in scenarios where covariance matrix sequences of varying scales can occur.
    \item An approach to learning optimal triggering condition parameters from an application-specific dataset of covariance matrix sequences and
    \item a quantitative performance evaluation on a dataset derived by performing an estimation task on real-world data that replaces the qualitative evaluation performed in \cite{funk_conservative_2023} on a rather limited set of synthetic data.
\end{enumerate}

\section{Notation}
In the following, scalars are denoted by standard lowercase letters (e.g., $a$), vectors by lowercase underlined letters (e.g., $\vec a$), and matrices by uppercase bold letters (e.g., $\mat A$).
Elements of a vector $\vec a$ or a matrix $\mat A$ are indicated by $[\vec a]_i$ or $[\mat A]_{ij}$, where $i$ and $j$ are the element's row- and column index, respectively.
The identity matrix is denoted by $\mat I$ and the all one vector and matrix by $\vec 1$ and $\mat 1$, respectively.
Elementwise multiplication (the Hadamard product) of two matrices $\mat A$ and $\mat B$ is indicated by $\mat A\odot\mat B$.
Furthermore, $|\mat A|$ denotes taking the elementwise absolute value of the matrix $\mat A$.
For symmetric matrices $\mat A$ and $\mat B$ the notation $\mat A\succeq\mat B$ is synonymous with $\mat A-\mat B$ being positive semidefinite.
The notation $\mat A\geq\mat B$ expresses the elementwise (standard) inequality of the matrices $\mat A$ and $\mat B$.
Sets are denoted by standard uppercase letters, e.g., $\set A$, and the set cardinality by $|\set A|$.
The set of real symmetric $n\times n$ matrices is denoted as $\set S_n$, the set of real positive semidefinite $n\times n$ matrices as $\set S_n^+$.
The set of real symmetric diagonally dominant $n\times n$ matrices is denoted as $\set{DD}_n$.

\section{Conservative Elementwise Event-Triggered Covariance Matrix Compression}\label{sec:event_triggered_approach}

This section is dedicated to the introduction of the approach first proposed in \cite{funk_conservative_2023}, to its rigorous derivation, generalization, and extension.
The issue of achieving data reduction while retaining conservativeness is adressed using robust optimization and the notion of diagonal dominance.
A new general condition that event-triggering mechanisms used by the generalized approach must satisfy is presented, and several concrete such mechanisms, among them extensions of those in \cite{funk_conservative_2023} and new ones, are introduced. 
Finally, the novel idea of per-element application and combination of the different triggering mechanisms is introduced and discussed.
The proposed extensions and additions allow, e.g., to control the accuracy of transmission for each covariance matrix element individually or to transmit certain subsets of elements only within an allotted amount of bandwidth.
Note that all proofs for theorems, lemmas, etc., are located in the appendix.

\subsection{Overview of Structure and Function}\label{sec:overview_trigger_bounder}

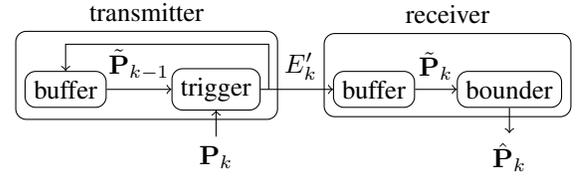
\begin{figure}
    \centering
    \begin{tikzpicture}
    \node (input) at (3.1,-0.9) {$\mat P_k$};
    \node[draw,rounded corners] (buffer) at (1.1,0) {buffer};
    \node[draw,rounded corners] (trigger) at (3.1,0) {trigger};
    \node[draw,rounded corners] (rbuffer) at (5.2,0) {buffer};
    \node[draw,rounded corners] (bounder) at (7.0,0) {bounder};
    \node (output) at (7.0,-0.9) {$\hat{\mat P}_k$};
    \coordinate (mid) at (3.8,0);
    \coordinate (abovemid) at (3.8,0.6);
    \coordinate (abovebuffer) at (1.1,0.6);
    \draw[->] (input) -- (trigger);
    \draw[->] (bounder) -- (output);
    \draw[->] (buffer) -- node[above] (bar) {$\tilde{\mat P}_{k-1}$} (trigger);
    \draw[->] (mid) -- (abovemid) -- (abovebuffer) -- (buffer.north);
    \draw[->] (trigger) -- (mid) -- node[above]{$\set E_k'$} (rbuffer);
    \draw[->] (rbuffer) -- node[above] (foo) {$\tilde{\mat P}_k$} (bounder);
    \node[fit=(buffer)(trigger)(mid)(bar),draw,rounded corners,label=transmitter] {};
    \node[fit=(rbuffer)(bounder)(foo),draw,rounded corners,label=receiver] {};
\end{tikzpicture}
    \caption{Overview of event-triggered covariance compression approach. Covariance matrix to be transmitted $\mat P_k$, transmitter buffer matrix $\tilde{\mat P}_{k-1}$, reduced event-set $\set E_k'$, receiver buffer matrix $\tilde{\mat P}_k$ and upper bound $\hat{\mat P}_k$ on original covariance matrix $\mat P_k$.\label{fig:event_approach}}
\end{figure}

The proposed approach investigates the scenario in which a transmitter aims to send a sequence of covariance matrices $\mat P_k\in\set S_n^+$, $k\in\mathbb N$, to a receiver in an elementwise event-triggered manner, which will subsequently be elucidated. 
Note that in the following, it is assumed that no packet loss occurs, i.e., data that is transmitted is received completely and without errors.
In practice this may be achieved through protocols such as TCP, which guarantee data delivery, and error correcting codes. 
For a structural overview of the approach see Fig.~\ref{fig:event_approach}, which illustrates the components -- trigger, buffers, and bounder, distributed across transmitter and receiver -- and the data flow between them. 
Fig.~\ref{fig:example_cov_transmission} shows an example of an event-triggered covariance matrix sequence transmission that uses the proposed approach.

The transmission proceeds as follows: The trigger located in the transmitter determines in each timestep $k$ the index-value pairs $((i,j),[\mat P_k]_{ij})$ of the current covariance matrix $\mat P_k$ to be transmitted, based only on $\mat P_k$ and the matrix $\tilde{\mat P}_{k-1}\in\set S_n$ stored in the buffer located in the transmitter. 
This buffer matrix is an approximation of the actual covariance matrix at the last timestep.
The index-value pairs to be transmitted in the timestep are collected in the event-set $\set E_k$.
It is assumed that the trigger selects the index-value pairs symmetrically, i.e., such that $((i,j),[\mat P_k]_{ij})\in\set E_k$ if and only if $((j,i),[\mat P_k]_{ji})\in\set E_k$.
The so-called reduced event-set is denoted $\set E_k'$ and contains only those index-value pairs from $\set E_k$ that correspond to the upper triangle of $\mat P_k$.
Due to symmetry, $\set E_k$ can be reconstructed from $\set E_k'$.
The reduced event-set $E_k'$ is transmitted and the transmitter buffer is updated using the index-value pairs in the reconstructed event-set $\set E_k$, yielding the new buffer matrix $\tilde{\mat P}_k\in S_n$.

Simultaneously, the receiver operates its own buffer, which retains the most recent covariance matrix elements until updated values are received in the form of $\set E_k'$ and incorporated into the buffer. 
Due to the assumption of no packet loss and error-free transmission, the values of transmitter and receiver buffer are identical at each timestep, as long as they are initialized identically.
Notably, the buffers may not represent a valid covariance matrix, or a conservative one, i.e., one satisfying $\tilde{\mat P}_k\succeq\mat P_k$. 
See Fig.~\ref{fig:cons_condition} for a visual explanation of this conservativeness condition in terms of confidence / credible regions.
To address the aforementioned issue, the approach incorporates the bounder, which is located in the receiver, and is specifically designed to generate an upper bound $\hat{\mat P}_k\succeq\mat P_k$, $\hat{\mat P}_k\in\set S_n^+$, on the original covariance matrix $\mat P_k$. 
The bounding mechanism relies solely on the present (and possibly past) information $\tilde{\mat P}_k,\tilde{\mat P}_{k-1}$ within the buffer and the event-set $\set E_k$, which are available at the receiver.

\begin{figure}
    \centering
    \begin{tikzpicture}
\node at (0,1) {$k=0$};
\node at (2,1) {$k=1$};
\node at (4,1) {$k=2$};

\node[align=center] at (-2.25,0) {$\mathbf{P}_k$\\(sender)};
\node at (2,0) {$\begin{bmatrix}\textcolor{red}{1.5} & \textcolor{red}{0.5}\\\textcolor{red}{0.5} & 1.2\end{bmatrix}$};
\node at (4,0) {$\begin{bmatrix}1.5 & 0.5\\0.5 & \textcolor{red}{0.5}\end{bmatrix}$};

\node[align=center] at (-2.25,-1.5) {$\tilde{\mathbf{P}}_k$\\(buffer)};
\node at (0,-1.5) {$\begin{bmatrix}1.0 & 0.0\\0.0 & 1.0\end{bmatrix}$};
\node at (2,-1.5) {$\begin{bmatrix}\textcolor{red}{1.5} & \textcolor{red}{0.5}\\\textcolor{red}{0.5} & 1.0\end{bmatrix}$};
\node at (4,-1.5) {$\begin{bmatrix}1.5 & 0.5\\0.5 & \textcolor{red}{0.5}\end{bmatrix}$};

\node[align=center] at (-2.25,-3) {$\hat{\mathbf{P}}_k$\\(receiver)};
\node at (2,-3) {$\begin{bmatrix}1.5 & 0.5\\0.5 & \textcolor{red}{1.7}\end{bmatrix}$};
\node at (4,-3) {$\begin{bmatrix}\textcolor{red}{2.0} & 0.5\\0.5 & \textcolor{red}{1.0}\end{bmatrix}$};

\draw[->,thick] (1.4,-0.5) -- node[right]{transmit} (1.4,-1);
\draw[->,thick] (3.4,-0.5) -- node[right]{transmit} (3.4,-1);
\draw[->,thick] (1.5,-2) -- node[right]{bound} (1.5,-2.5);
\draw[->,thick] (3.5,-2) -- node[right]{bound} (3.5,-2.5);
\end{tikzpicture}
    \caption{Exemplary event-triggered covariance transmission. Sent covariance matrix elements ($[\mat P_k]_{ij}$), changed buffer matrix elements ($[\tilde{\mat P}_k]_{ij}$), and bound elements ($[\est{\mat P}_k]_{ij}$) modified by the bounder compared to the buffer value are shown in red.\label{fig:example_cov_transmission}}
\end{figure}

\subsection{Bounder Using Diagonal Dominance}\label{sec:bounder_cov}

The goal of the bounder is to find the tightest $\hat{\mat P}_k$, in the sense that $\trace(\hat{\mat P}_k)$ is minimal\footnote{Note that $\trace(\cdot)$ is matrix monotone, i.e., $\mat A\preceq\mat B\implies\trace(\mat A)\leq\trace(\mat B)$. This  justifies minimizing the trace to find a minimal upper bound.}, that is still conservative, i.e., satisfies $\hat{\mat P}_k\succeq\mat P_k$, given only the present and past buffer matrices $\tilde{\mat P}_k,\tilde{\mat P}_{k-1}$ and the event-set $\set E_k$.
In the following, it is assumed, without loss of generality, that the bound is of the form $\hat{\mat P}_k=\tilde{\mat P}_k+\mat S_k$, where $\mat S_k\in\set S_n$ is to be determined.
With this, the conservativeness requirement is expressed as $\hat{\mat P}_k=\tilde{\mat P}_k+\mat S_k\succeq\mat P_k$ or equivalently as $\mat\Delta_k+\mat S_k\succeq\mat 0$, where $\mat\Delta_k=\tilde{\mat P}_k-\mat P_k$ is the buffer error.
Since the actual buffer error $\mat\Delta_k$ is unknown to the receiver, the above conservativeness requirement is only guaranteed to be satisfied if $\mat\Delta+\mat S_k\succeq\mat 0$ holds for all possible buffer errors $\mat\Delta$.
The set of possible buffer errors $\set F_k$ is determined by the employed triggering rule, the buffer matrices $\tilde{\mat P}_k,\tilde{\mat P}_{k-1}$, and the event-set $\set E_k$. 
In the following, $\set F_k$ is assumed to be of the elementwise form 
\begin{align}
    &\set F_k =\left\{\mat\Delta\in\set S_n\middle||\mat\Delta|\leq\hat{\mat\Delta}_k\right\}\label{eq:possible_errors}
\end{align}
where $\hat{\mat\Delta}_k\in\set S_n$ are non-negative elementwise bounds that are derived from the triggering rule and depend only on $\tilde{\mat P}_k,\tilde{\mat P}_{k-1}$ and $\set E_k$.
Any triggering rule that results in $\set F_k$ having the above form can be used with the proposed approach.
Note that if an index-value pair $((i,j),[\mat P_k]_{ij})$ is in the event-set $\set E_k$, the buffer error $[\mat\Delta_k]_{ij}$ will be zero so that the bound $[\hat{\mat \Delta}_k]_{ij}$ can be set to zero as well.
As will be shown in this section, the particular form of $\set F_k$ enables the bounder to be efficient while still allowing for a variety of trigger rules, c.f. Sec.~\ref{sec:triggers}.

Having introduced $\set F_k$, the overall task of minimizing the trace of the bound, subject to the conservativeness constraint, is succinctly formulated as the robust optimization problem
\begin{alignat}{2}
    \underset{\mat S_k\in\set S_n}{\minimize} & \quad \trace(\tilde{\mat P}_k+\mat S_k)\label{eq:bound_sdp}                      \\
    \subjto                                   & \quad\mat\Delta+\mat S_k\succeq\mat 0\quad \forall\mat\Delta\in\set F_k\nonumber
\end{alignat}
where the optimal bound is obtained by setting $\hat{\mat P}_k=\tilde{\mat P}_k+\mat S_k^*$, where $\mat S_k^*$ is a globally optimal solution of \eqref{eq:bound_sdp}. 
Unfortunately, solving \eqref{eq:bound_sdp} is hard as it is a semi-infinite semidefinite program \cite[Sec.\,4]{ben-tal_robust_2002}. 
In the subsequent part of this section, the complexity of the problem is reduced through approximation to ensure a manageable solution. 
To achieve this, the method proposed in \cite{ahmadi_dsos_2019} is adopted, wherein the positive semidefinite (PSD) constraints of an optimization problem are replaced by so-called diagonal dominance (DD) constraints.
\begin{definition}
    \label{def:diagonal_dominance}
    $\mat A\in\mathbb R^{n\times n}$ is diagonally dominant (DD) if $[\mat A]_{ii}\geq\sum_{\substack{j=1\\j\neq i}}^n|[\mat A]_{ij}|$ for all $i=1,\ldots,n$.\footnote{Diagonal dominance is often defined slightly differently by requiring $|[\mat A]_{ii}|\geq\sum_{j\neq i}|[\mat A]_{ij}|$ for all $i=1,\ldots,n$.
    In conjunction with the condition $[\mat A]_{ii}\geq 0$ for all $i=1,\ldots,n$ this is equivalent to our definition.}
\end{definition}
The significance of diagonal dominance in approximating \eqref{eq:bound_sdp} is due to its relation to positive semidefiniteness, as substantiated by the following lemma.
\begin{lemma}
    \label{thm:dd_implies_psd}
    If $\mat A\in\set{DD}_n$ then it is PSD.
\end{lemma}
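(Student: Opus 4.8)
The plan is to show that a symmetric diagonally dominant matrix with nonnegative diagonal is PSD by exhibiting, for an arbitrary vector $\vec x\in\mathbb R^n$, a lower bound on the quadratic form $\vec x^\top\mat A\vec x$ that is manifestly nonnegative. First I would expand $\vec x^\top\mat A\vec x=\sum_i [\mat A]_{ii}[\vec x]_i^2+\sum_{i\neq j}[\mat A]_{ij}[\vec x]_i[\vec x]_j$ and use the diagonal dominance inequality $[\mat A]_{ii}\geq\sum_{j\neq i}|[\mat A]_{ij}|$ (note this already forces $[\mat A]_{ii}\geq 0$) to replace each diagonal term. This yields
\begin{align}
    \vec x^\top\mat A\vec x\geq\sum_i\Bigl(\sum_{j\neq i}|[\mat A]_{ij}|\Bigr)[\vec x]_i^2+\sum_{i\neq j}[\mat A]_{ij}[\vec x]_i[\vec x]_j .\nonumber
\end{align}

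Next I would split the first double sum symmetrically over unordered pairs $\{i,j\}$, writing $\sum_i\sum_{j\neq i}|[\mat A]_{ij}|[\vec x]_i^2=\sum_{i<j}|[\mat A]_{ij}|\bigl([\vec x]_i^2+[\vec x]_j^2\bigr)$, and likewise grouping the off-diagonal terms of $\mat A$ using symmetry $[\mat A]_{ij}=[\mat A]_{ji}$ as $\sum_{i<j}2[\mat A]_{ij}[\vec x]_i[\vec x]_j$. Combining these, the bound becomes $\vec x^\top\mat A\vec x\geq\sum_{i<j}\bigl(|[\mat A]_{ij}|([\vec x]_i^2+[\vec x]_j^2)+2[\mat A]_{ij}[\vec x]_i[\vec x]_j\bigr)$. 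For each pair I would then observe that $|[\mat A]_{ij}|([\vec x]_i^2+[\vec x]_j^2)+2[\mat A]_{ij}[\vec x]_i[\vec x]_j\geq|[\mat A]_{ij}|\bigl([\vec x]_i^2+[\vec x]_j^2-2|[\vec x]_i||[\vec x]_j|\bigr)=|[\mat A]_{ij}|\bigl(|[\vec x]_i|-|[\vec x]_j|\bigr)^2\geq 0$, since $2[\mat A]_{ij}[\vec x]_i[\vec x]_j\geq -2|[\mat A]_{ij}||[\vec x]_i||[\vec x]_j|$. Summing over all pairs gives $\vec x^\top\mat A\vec x\geq 0$, and since $\vec x$ was arbitrary and $\mat A$ is symmetric, $\mat A$ is PSD.

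An alternative I would keep in reserve is the Gershgorin circle theorem: every eigenvalue of $\mat A$ lies in some disc centered at $[\mat A]_{ii}$ with radius $\sum_{j\neq i}|[\mat A]_{ij}|$, hence has real part $\geq [\mat A]_{ii}-\sum_{j\neq i}|[\mat A]_{ij}|\geq 0$; combined with $\mat A$ symmetric (so eigenvalues are real) this gives all eigenvalues nonnegative. This is shorter but relies on citing Gershgorin, whereas the quadratic-form argument is self-contained. I expect the only mildly delicate point to be the bookkeeping when passing from sums over ordered index pairs to sums over unordered pairs and making sure the diagonal-dominance slack is distributed correctly across the pairs; the per-pair inequality itself is just a disguised version of $(a-b)^2\geq 0$ and poses no real obstacle.
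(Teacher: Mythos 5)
Your primary argument is correct, and it takes a genuinely different route from the paper. The paper proves the lemma via the Gershgorin disc theorem: every eigenvalue of the symmetric matrix $\mat A$ is real and lies in some disc centered at $[\mat A]_{ii}$ of radius $\sum_{j\neq i}|[\mat A]_{ij}|$, so diagonal dominance forces every eigenvalue to be at least $[\mat A]_{ii}-\sum_{j\neq i}|[\mat A]_{ij}|\geq 0$ --- exactly the ``alternative'' you keep in reserve. Your main proof instead bounds the quadratic form directly: replacing each $[\mat A]_{ii}[\vec x]_i^2$ by $\bigl(\sum_{j\neq i}|[\mat A]_{ij}|\bigr)[\vec x]_i^2$ (valid since $[\vec x]_i^2\geq 0$), regrouping over unordered pairs via symmetry, and observing that each pair contributes at least $|[\mat A]_{ij}|\bigl(|[\vec x]_i|-|[\vec x]_j|\bigr)^2\geq 0$. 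The bookkeeping is right: $\sum_i\sum_{j\neq i}|[\mat A]_{ij}|[\vec x]_i^2=\sum_{i<j}|[\mat A]_{ij}|([\vec x]_i^2+[\vec x]_j^2)$ and $\sum_{i\neq j}[\mat A]_{ij}[\vec x]_i[\vec x]_j=\sum_{i<j}2[\mat A]_{ij}[\vec x]_i[\vec x]_j$ both use $[\mat A]_{ij}=[\mat A]_{ji}$, which holds because $\set{DD}_n$ is defined as a set of \emph{symmetric} matrices. What each approach buys: the Gershgorin route is shorter but imports an external theorem (the paper cites Horn and Johnson for it); your quadratic-form argument is entirely self-contained and in effect exhibits $\mat A$ as a nonnegative combination of $2\times 2$-supported PSD matrices plus a nonnegative diagonal remainder, which is precisely the structural fact underlying the DSOS relaxation of \cite{ahmadi_dsos_2019} that the paper builds on. Either proof is acceptable here.
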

\begin{proof}
    See Appendix\,\ref{sec:dd_implies_psd}.
\end{proof}
In the context of approximating \eqref{eq:bound_sdp}, the preceding theorem demonstrates that $\mat\Delta+\mat S_k\in\set DD_n$ implies $\mat\Delta+\mat S_k\succeq\mat 0$ and thus replacing the PSD constraints with DD constraints preserves the conservativeness of the solutions of \eqref{eq:bound_sdp}.
The robust optimization problem with DD constraints substituted for its PSD constraints, which is considered from now on, is
\begin{alignat}{2}
    \underset{\mat S_k\in\set S_n}{\minimize} & \quad \trace(\tilde{\mat P}_k+\mat S_k)\label{eq:bound_dd}                 \\
    \subjto                                   & \quad 
    \begin{matrix}
    [\mat\Delta]_{ii}+[\mat S_k]_{ii}\geq\sum_{\substack{j=1\\j\neq i}}^n|[\mat\Delta]_{ij}+[\mat S_k]_{ij}|\\
    \forall\mat\Delta\in\set F_k\nonumber,i=1,\ldots,n
    \end{matrix}
\end{alignat}
where the DD constraints are shown explicitly.
It is a semi-infinite linear program (LP), and can be solved using existing methods \cite[Sec.\,2.2]{bertsimas_theory_2010}.
However, \eqref{eq:bound_dd} can be simplified further: The feasible set can be represented using a finite instead of an infinite number of linear inequality constraints.
\begin{lemma}\label{thm:feasible_sets_identical}
    Let $\set F_k$ be of the form \eqref{eq:possible_errors}, then the sets $$\set A=\left\{\mat S_k\in\set S_n\middle|\begin{matrix}[\mat\Delta]_{ii}+[\mat S_k]_{ii}\geq\sum_{\substack{j=1\\j\neq i}}^n|[\mat\Delta]_{ij}+[\mat S_k]_{ij}|,\\\forall\mat\Delta\in\set F_k,i=1,\ldots,n\end{matrix}\right\}$$ and $$\set B=\left\{\mat S_k\in\set S_n\middle|\begin{matrix}[\mat S_k]_{ii}\geq \sum_{j=1}^n[\hat{\mat\Delta}_k]_{ij}+\sum_{\substack{j=1\\j\neq i}}^n|[\mat S_k]_{ij}|,\\i=1,\ldots,n\end{matrix}\right\}$$ are identical.
\end{lemma}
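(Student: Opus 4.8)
The plan is to prove the set equality $\set A=\set B$ by establishing the two inclusions separately. The key structural observation is that, for each fixed row index $i$, the $i$-th diagonal-dominance constraint involves $\mat\Delta$ only through the entries $[\mat\Delta]_{i1},\dots,[\mat\Delta]_{in}$, and by \eqref{eq:possible_errors} these entries range --- up to the symmetry of $\mat\Delta$, which is harmless when a single row is held fixed --- independently over the intervals $[-[\hat{\mat\Delta}_k]_{ij},[\hat{\mat\Delta}_k]_{ij}]$. Hence the constraint that is universally quantified over $\set F_k$ can be reduced, row by row, to a single worst-case instance.

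For the inclusion $\set B\subseteq\set A$, I would take $\mat S_k\in\set B$, an arbitrary $\mat\Delta\in\set F_k$, and a fixed $i$, and bound $\sum_{j\neq i}\bigl|[\mat\Delta]_{ij}+[\mat S_k]_{ij}\bigr|\le\sum_{j\neq i}\bigl([\hat{\mat\Delta}_k]_{ij}+|[\mat S_k]_{ij}|\bigr)$ via the triangle inequality together with $|\mat\Delta|\le\hat{\mat\Delta}_k$. Rewriting $\sum_{j\neq i}[\hat{\mat\Delta}_k]_{ij}=\sum_{j=1}^n[\hat{\mat\Delta}_k]_{ij}-[\hat{\mat\Delta}_k]_{ii}$ and invoking the defining inequality of $\set B$ bounds the right-hand side by $[\mat S_k]_{ii}-[\hat{\mat\Delta}_k]_{ii}$; finally $-[\mat\Delta]_{ii}\le|[\mat\Delta]_{ii}|\le[\hat{\mat\Delta}_k]_{ii}$ yields $[\mat\Delta]_{ii}+[\mat S_k]_{ii}\ge\sum_{j\neq i}|[\mat\Delta]_{ij}+[\mat S_k]_{ij}|$, and since $i$ and $\mat\Delta$ were arbitrary, $\mat S_k\in\set A$.

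For the converse $\set A\subseteq\set B$, the idea is to instantiate the constraint defining $\set A$ at a specially chosen worst-case error. For each $i$ I would set $[\mat\Delta^{(i)}]_{ii}=-[\hat{\mat\Delta}_k]_{ii}$, $[\mat\Delta^{(i)}]_{ij}=[\mat\Delta^{(i)}]_{ji}=\sgn([\mat S_k]_{ij})\,[\hat{\mat\Delta}_k]_{ij}$ for $j\neq i$ (taking $\sgn(0)=1$), and all remaining entries zero. Because $\hat{\mat\Delta}_k$ is symmetric with nonnegative entries, $\mat\Delta^{(i)}\in\set S_n$ and $|\mat\Delta^{(i)}|\le\hat{\mat\Delta}_k$, so $\mat\Delta^{(i)}\in\set F_k$; moreover $|[\mat\Delta^{(i)}]_{ij}+[\mat S_k]_{ij}|=[\hat{\mat\Delta}_k]_{ij}+|[\mat S_k]_{ij}|$ for every $j\neq i$. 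Substituting $\mat\Delta^{(i)}$ into the $i$-th constraint of $\set A$ therefore collapses it exactly into the $i$-th inequality defining $\set B$; ranging over $i=1,\dots,n$ gives $\mat S_k\in\set B$.

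The inequality chains are routine; the only delicate point --- and the main obstacle --- is the worst-case construction: one must verify that $\mat\Delta^{(i)}$ genuinely lies in $\set F_k$ (symmetry and the box bounds, which is where nonnegativity of $\hat{\mat\Delta}_k$ is used) and that it saturates all terms of the $i$-th constraint simultaneously, which is precisely what the row-separability of the constraints makes possible.
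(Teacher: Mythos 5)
Your proof is correct and follows essentially the same route as the paper's: the triangle inequality together with $|\mat\Delta|\leq\hat{\mat\Delta}_k$ for $\set B\subseteq\set A$, and instantiation at the worst-case error with diagonal entry $-[\hat{\mat\Delta}_k]_{ii}$ and off-diagonal entries $\sgn([\mat S_k]_{ij})[\hat{\mat\Delta}_k]_{ij}$ for $\set A\subseteq\set B$. The only cosmetic difference is that you build a separate worst-case matrix $\mat\Delta^{(i)}$ for each row, whereas the paper uses one symmetric $\mat\Delta$ that saturates all $n$ constraints simultaneously; both are valid.
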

\begin{proof}
    See Appendix \ref{sec:feasible_sets_identical}.
\end{proof}
This result shows that the semi-infinite LP \eqref{eq:bound_dd}, due to the special form of $\set F_k$, is actually equivalent to the standard LP
\begin{alignat}{2}
    \underset{\mat S_k\in\set S_n}{\minimize} & \quad \trace(\tilde{\mat P}_k+\mat S_k)\label{eq:bound_lp}                 \\
    \subjto                                   & \quad 
    \begin{matrix}
    [\mat S_k]_{ii}\geq\sum_{j=1}^n[\hat{\mat\Delta}_k]_{ij}+\sum_{\substack{j=1\\j\neq i}}^n|[\mat S_k]_{ij}|,\\
    i=1,\ldots,n
    \end{matrix}\nonumber
\end{alignat}
which has a closed form solution, as the next theorem shows.
\begin{theorem}\label{thm:bound_lp_closed_form_solution}
    A global minimizer of \eqref{eq:bound_lp} is given by
    \begin{equation*}
        [\mat S_k^*]_{ij} =
        \begin{cases}
            \sum_{j=1}^n[\hat{\mat\Delta}_k]_{ij} & ,\,i=j              \\
            0                           & ,\,\text{otherwise}
        \end{cases},
    \end{equation*}
    where $i,j=1,\ldots,n$. More succinctly, $\mat S_k^*=\diag(\hat{\mat\Delta}_k\vec 1)$.
\end{theorem}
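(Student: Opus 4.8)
The plan is to exploit the fact that both the objective and the constraints of \eqref{eq:bound_lp} decouple over the rows. First I would rewrite the objective: since $\trace(\tilde{\mat P}_k+\mat S_k)=\trace(\tilde{\mat P}_k)+\sum_{i=1}^n[\mat S_k]_{ii}$ and $\trace(\tilde{\mat P}_k)$ does not depend on $\mat S_k$, problem \eqref{eq:bound_lp} is equivalent to minimizing $\sum_{i=1}^n[\mat S_k]_{ii}$ over the same feasible set.

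Next I would derive a lower bound on this reduced objective that holds for every feasible $\mat S_k\in\set S_n$. For each row index $i$ the $i$-th constraint gives $[\mat S_k]_{ii}\geq\sum_{j=1}^n[\hat{\mat\Delta}_k]_{ij}+\sum_{j\neq i}|[\mat S_k]_{ij}|\geq\sum_{j=1}^n[\hat{\mat\Delta}_k]_{ij}$, where the last step drops the nonnegative term $\sum_{j\neq i}|[\mat S_k]_{ij}|$. Summing over $i=1,\ldots,n$ yields $\sum_{i=1}^n[\mat S_k]_{ii}\geq\sum_{i=1}^n\sum_{j=1}^n[\hat{\mat\Delta}_k]_{ij}=\sum_{i=1}^n[\hat{\mat\Delta}_k\vec 1]_i$, so the optimal value of \eqref{eq:bound_lp} is at least $\trace(\tilde{\mat P}_k)+\sum_{i=1}^n[\hat{\mat\Delta}_k\vec 1]_i$.

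Then I would verify that $\mat S_k^*=\diag(\hat{\mat\Delta}_k\vec 1)$ is feasible and attains this lower bound. Being diagonal it is symmetric, hence $\mat S_k^*\in\set S_n$; its off-diagonal entries vanish, so for each $i$ the right-hand side of the $i$-th constraint collapses to $\sum_{j=1}^n[\hat{\mat\Delta}_k]_{ij}=[\mat S_k^*]_{ii}$, so every constraint holds with equality and $\mat S_k^*$ is feasible. Its reduced objective value is $\sum_{i=1}^n[\mat S_k^*]_{ii}=\sum_{i=1}^n\sum_{j=1}^n[\hat{\mat\Delta}_k]_{ij}$, which equals the lower bound established above. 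Therefore $\mat S_k^*$ is a global minimizer of \eqref{eq:bound_lp}, and correspondingly $\hat{\mat P}_k=\tilde{\mat P}_k+\mat S_k^*$.

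This argument is elementary and I do not anticipate a genuine obstacle. The only points needing care are that discarding $\sum_{j\neq i}|[\mat S_k]_{ij}|$ from the constraint is legitimate precisely because this term is nonnegative, so no slack in the bound is lost; and that one must confirm the candidate satisfies every constraint, not merely that it has the right objective value — which it does, with equality, once the off-diagonal entries are set to zero. It is also worth noting that the minimizer is in general not unique, which is why the statement claims only that this particular $\mat S_k^*$ is \emph{a} global minimizer.
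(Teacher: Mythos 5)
Your proof is correct and rests on the same two observations as the paper's: the constraints force $[\mat S_k]_{ii}\geq\sum_{j}[\hat{\mat\Delta}_k]_{ij}+\sum_{j\neq i}|[\mat S_k]_{ij}|$, and the off-diagonal entries can only increase that lower bound, so they should vanish. The only difference is presentational — you package the argument as a global lower bound plus a feasible point attaining it, whereas the paper phrases it as nested partial minimization (first over the diagonal with the off-diagonal fixed, then over the off-diagonal); your certificate-style version is, if anything, slightly tidier.
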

\begin{proof}
    See Appendix\,\ref{sec:bound_lp_closed_form_solution}.
\end{proof}
The bound $\hat{\mat P}_k=\tilde{\mat P}_k+\mat S_k^*$ derived from the above minimizer is extremely simple to compute and implement and thus clearly preferable to solving the semi-infinite LP \eqref{eq:bound_dd} using existing iterative numerical methods.
In addition, one can derive a general upper bound on the error $\|\hat{\mat P}_k-\mat P_k\|_F$ in terms of $\hat{\mat\Delta}_k$, where $\|\cdot\|_F$ is the Frobenius norm.
\begin{theorem}\label{thm:bound_error_bound}
    Let $\hat{\mat P}_k=\tilde{\mat P}_k+\mat S_k^*$, where $\mat S_k^*$ is an optimal solution of \eqref{eq:bound_lp}. Then the bound error is upper bounded as
    \begin{equation*}
        \|\mat W\odot(\hat{\mat P}_k-\mat P_k)\|_F\leq\|\mat W\odot(\diag(\hat{\mat\Delta}_k\vec 1)+\hat{\mat\Delta}_k)\|_F,
    \end{equation*}
    where $\mat W\in\{0,1\}^{n\times n}$ allows selecting submatrices.
\end{theorem}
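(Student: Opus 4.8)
The plan is to reduce the claimed Frobenius-norm bound to a purely elementwise inequality and then lift it back up. First I would decompose the bound error using the explicit minimizer from Theorem~\ref{thm:bound_lp_closed_form_solution}. Writing $\mat\Delta_k=\tilde{\mat P}_k-\mat P_k$ for the actual buffer error and using $\mat S_k^*=\diag(\hat{\mat\Delta}_k\vec 1)$, we have
\[
  \hat{\mat P}_k-\mat P_k=\tilde{\mat P}_k+\mat S_k^*-\mat P_k=\mat\Delta_k+\diag(\hat{\mat\Delta}_k\vec 1).
\]
Since $\set F_k$ is by construction the set of possible buffer errors, $\mat\Delta_k\in\set F_k$, and hence $|\mat\Delta_k|\leq\hat{\mat\Delta}_k$ elementwise by~\eqref{eq:possible_errors}; moreover $\hat{\mat\Delta}_k$ has nonnegative entries, so $\mat S_k^*=\diag(\hat{\mat\Delta}_k\vec 1)$ is elementwise nonnegative as well.

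Next I would derive the elementwise estimate $|\hat{\mat P}_k-\mat P_k|\leq\diag(\hat{\mat\Delta}_k\vec 1)+\hat{\mat\Delta}_k$. Applying the triangle inequality entrywise to the decomposition gives $|\hat{\mat P}_k-\mat P_k|\leq|\mat\Delta_k|+|\diag(\hat{\mat\Delta}_k\vec 1)|$; the first term is bounded by $\hat{\mat\Delta}_k$ and the second equals $\diag(\hat{\mat\Delta}_k\vec 1)$ by nonnegativity. Equivalently, one may split into cases: for $i\neq j$ the diagonal matrix contributes nothing and $|[\hat{\mat P}_k-\mat P_k]_{ij}|=|[\mat\Delta_k]_{ij}|\leq[\hat{\mat\Delta}_k]_{ij}$, while for $i=j$ the triangle inequality together with $[\hat{\mat\Delta}_k]_{ij}\geq 0$ yields $|[\hat{\mat P}_k-\mat P_k]_{ii}|\leq[\hat{\mat\Delta}_k]_{ii}+\sum_{j=1}^n[\hat{\mat\Delta}_k]_{ij}=[\diag(\hat{\mat\Delta}_k\vec 1)+\hat{\mat\Delta}_k]_{ii}$.

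Finally I would lift this to the Frobenius norm. Since $\mat W\in\{0,1\}^{n\times n}$ has nonnegative entries and $\diag(\hat{\mat\Delta}_k\vec 1)+\hat{\mat\Delta}_k$ is elementwise nonnegative, for every $i,j$ we get $\big|[\mat W\odot(\hat{\mat P}_k-\mat P_k)]_{ij}\big|=[\mat W]_{ij}\,\big|[\hat{\mat P}_k-\mat P_k]_{ij}\big|\leq[\mat W]_{ij}\,[\diag(\hat{\mat\Delta}_k\vec 1)+\hat{\mat\Delta}_k]_{ij}=\big|[\mat W\odot(\diag(\hat{\mat\Delta}_k\vec 1)+\hat{\mat\Delta}_k)]_{ij}\big|$. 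Squaring, summing over all $i,j$, and taking square roots (i.e., using that $\|\cdot\|_F$ is monotone with respect to the elementwise absolute value) gives the claim. I do not expect a genuine obstacle: the only points requiring care are invoking Theorem~\ref{thm:bound_lp_closed_form_solution} for the exact form of $\mat S_k^*$, noting $\mat\Delta_k\in\set F_k$, and using that both $\hat{\mat\Delta}_k$ and $\mat S_k^*$ are elementwise nonnegative so the triangle inequality and Frobenius-norm monotonicity apply cleanly.
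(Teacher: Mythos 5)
Your proposal is correct and follows essentially the same route as the paper's proof: both use the decomposition $\hat{\mat P}_k-\mat P_k=\mat\Delta_k+\diag(\hat{\mat\Delta}_k\vec 1)$, bound the off-diagonal entries by $[\hat{\mat\Delta}_k]_{ij}$ and the diagonal entries by $[\hat{\mat\Delta}_k]_{ii}+\sum_j[\hat{\mat\Delta}_k]_{ij}$ via $|\mat\Delta_k|\leq\hat{\mat\Delta}_k$ and nonnegativity, and then pass to the Frobenius norm. The only (purely presentational) difference is that you first state the elementwise absolute-value inequality and then invoke monotonicity of $\|\cdot\|_F$, whereas the paper expands the squared norm directly and bounds the diagonal and off-diagonal sums term by term.
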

\begin{proof}
    See Appendix\,\ref{sec:bound_error_bound}.
\end{proof}
The use of the Frobenius norm allows to derive error bounds for individual elements as a special case.
Note that the above error bound depends on $\hat{\mat\Delta}_k$ and thus may change over time.
Because $\hat{\mat\Delta}_k$ only depends on $\tilde{\mat P}_k,\tilde{\mat P}_{k-1}$ and $E_k$, which are both known at the receiver, the error bound can be computed there and at the transmitter without any issues.

\subsection{Basic Triggers}\label{sec:triggers}

\begin{figure}
    \centering
    \begin{tikzpicture}
        \node (abch) at (0,0) {$\begin{bmatrix} \textcolor{red}{1.0} & \textcolor{red}{0.5} & \textcolor{red}{0.7}\\ \textcolor{red}{0.5} & \textcolor{red}{0.9} & 0.3\\ \textcolor{red}{0.7} & 0.3 & \textcolor{red}{1.1} \end{bmatrix}$};
        \node[below of=abch,align=center] {absolute-change\\ $\mat T=0.5\cdot\mat 1$};
        \node (moch) at (3,0) {$\begin{bmatrix} \textcolor{red}{1.0} & 0.5 & 0.7\\ 0.5 & \textcolor{red}{0.9} & 0.3\\ 0.7 & 0.3 & \textcolor{red}{1.1} \end{bmatrix}$};
        \node[below of=moch,align=center] {relative-change\\ $\mat T=0.75\cdot\mat 1$};
        \node (hybr) at (6,0) {$\begin{bmatrix} \textcolor{red}{1.0} & 0.5 & \textcolor{red}{0.7}\\ 0.5 & \textcolor{red}{0.9} & 0.3\\ \textcolor{red}{0.7} & 0.3 & \textcolor{red}{1.1} \end{bmatrix}$};
        \node[below of=hybr,align=center] {$N$-most-changed\\ $N=4$};
    \end{tikzpicture}
    \caption{Matrix $|\mat P_k-\tilde{\mat P}_{k-1}|$ ($|\mat P_k-\tilde{\mat P}_{k-1}|/|\tilde{\mat P}_{k-1}|$ for relative-change trigger) based on which the trigger decision is made and the elements in $\set E_k$ (red) for different triggers. Here, the $N$-most-changed trigger uses the absolute buffer deviation. Symmetric elements need not be transmitted. Hence, although $|E_k|=5$ for the $N$-most-changed trigger, only $N=4$ elements are sent. \label{fig:different_triggers_illustration}}
\end{figure}

In the previous section, the practical and computational feasibility of the bounder was achieved by constraining the potential buffer errors to the specific form given in \eqref{eq:possible_errors}.
There, $\hat{\mat\Delta}_k\in\set S_n$ are non-negative elementwise bounds on the possible buffer error that rely solely on the receiver's buffer matrices $\tilde{\mat P}_k,\tilde{\mat P}_{k-1}$ and the received event-set $\set E_k$.
In this section, concrete triggering mechanisms are introduced that precisely adhere to the aforementioned format.

\subsubsection*{Absolute-Change Trigger}
The first and simplest considered mechanism is the 'absolute-change trigger'. 
Under this mechanism an index-value tuple $((i,j),[\mat P_k]_{ij})$ is in the transmitted event-set $\set E_k$ if and only if $|[\mat P_k-\tilde{\mat P}_{k-1}]_{ij}|>[\mat T]_{ij}$ is satisfied. 
Here, $\mat T\in\set S_n$ is a predetermined matrix comprising non-negative thresholds.
The absolute-change trigger transmits an index-value pair if the value sufficiently deviates, in absolute value, from the value stored in the transmitter buffer.
\begin{theorem}[Absolute-Change Bounds]\label{thm:absolute_change_bounds}
    Using the absolute-change trigger, bounds on possible buffer errors are given by
    \begin{align*}
        [\hat{\mat\Delta}_k]_{ij} = 
        \begin{cases}
            [\mat T]_{ij} & ,\,((i,j),\cdot)\notin\set E_k\\
            0 & ,\,\text{otherwise}
        \end{cases},
    \end{align*}
    where $\mat T\in\set S_n$ are the non-negative trigger thresholds. 
\end{theorem}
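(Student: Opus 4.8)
The plan is to verify directly that the proposed $\hat{\mat\Delta}_k$ indeed bounds every possible buffer error $[\mat\Delta_k]_{ij}$ that is consistent with the information available at the receiver, namely the buffer matrices and the event-set $\set E_k$. Recall that the buffer error is $[\mat\Delta_k]_{ij} = [\tilde{\mat P}_k]_{ij} - [\mat P_k]_{ij}$, and that by construction the transmitter updates its buffer whenever an element is triggered, so $[\tilde{\mat P}_k]_{ij} = [\mat P_k]_{ij}$ exactly when $((i,j),\cdot)\in\set E_k$ and $[\tilde{\mat P}_k]_{ij} = [\tilde{\mat P}_{k-1}]_{ij}$ otherwise. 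The goal is to show $|[\mat\Delta_k]_{ij}| \leq [\hat{\mat\Delta}_k]_{ij}$ for the stated choice of $\hat{\mat\Delta}_k$, i.e., that the resulting $\set F_k$ from \eqref{eq:possible_errors} genuinely contains the true error.

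First I would split into the two cases of the trigger condition. \textbf{Case 1: $((i,j),\cdot)\in\set E_k$.} Then the element was transmitted, so $[\tilde{\mat P}_k]_{ij} = [\mat P_k]_{ij}$, hence $[\mat\Delta_k]_{ij} = 0$, and the bound $[\hat{\mat\Delta}_k]_{ij} = 0$ is tight — this matches the remark in the text preceding Sec.~\ref{sec:triggers} that triggered elements get a zero error bound. \textbf{Case 2: $((i,j),\cdot)\notin\set E_k$.} Then the element was not transmitted, so $[\tilde{\mat P}_k]_{ij} = [\tilde{\mat P}_{k-1}]_{ij}$, and the triggering rule $|[\mat P_k-\tilde{\mat P}_{k-1}]_{ij}|>[\mat T]_{ij}$ did \emph{not} fire, which means $|[\mat P_k - \tilde{\mat P}_{k-1}]_{ij}| \leq [\mat T]_{ij}$. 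Substituting $[\tilde{\mat P}_k]_{ij} = [\tilde{\mat P}_{k-1}]_{ij}$ gives $|[\mat\Delta_k]_{ij}| = |[\tilde{\mat P}_k]_{ij} - [\mat P_k]_{ij}| = |[\tilde{\mat P}_{k-1}]_{ij} - [\mat P_k]_{ij}| \leq [\mat T]_{ij} = [\hat{\mat\Delta}_k]_{ij}$, as claimed. Finally I would note that $\hat{\mat\Delta}_k$ is symmetric (since $\mat T\in\set S_n$ and $\set E_k$ is selected symmetrically) and non-negative, so $\hat{\mat\Delta}_k\in\set S_n$ and it has the form required by \eqref{eq:possible_errors}; it also depends only on $\tilde{\mat P}_k,\tilde{\mat P}_{k-1},\set E_k$ as required, since $\mat T$ is fixed a priori.

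The argument is essentially a bookkeeping exercise, so I do not anticipate a genuine obstacle; the only subtlety worth stating carefully is the exact semantics of the buffer update — that an element of $\tilde{\mat P}_k$ either equals the fresh value $[\mat P_k]_{ij}$ (when triggered) or is carried over unchanged from $\tilde{\mat P}_{k-1}$ (when not) — since the whole proof hinges on identifying $[\tilde{\mat P}_k]_{ij}$ with $[\tilde{\mat P}_{k-1}]_{ij}$ in the untriggered case so that the negated triggering inequality can be applied. If one wanted to be fully rigorous one could also observe that this bound is the tightest possible given only $\set E_k$ and the buffers: for an untriggered element, any deviation up to $[\mat T]_{ij}$ is consistent with the trigger not having fired, so the bound cannot be shrunk without additional assumptions.
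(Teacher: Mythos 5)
Your proof is correct and follows essentially the same two-case argument as the paper: triggered elements have zero buffer error, and untriggered elements have unchanged buffer values so the negated trigger inequality $|[\mat P_k-\tilde{\mat P}_{k-1}]_{ij}|\leq[\mat T]_{ij}$ bounds the error. The extra remarks on symmetry and tightness are sound but not needed for the claim.
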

\begin{proof}
    See Appendix\,\ref{sec:absolute_change_bounds}.
\end{proof}
In contrast to \cite{funk_conservative_2023}, the absolute-change trigger as introduced here allows for individual thresholds on a per-element basis.
The error $\|\mat W\odot(\hat{\mat P}_k-\mat P_k)\|_F$ that can occur is seen from Theorems~\ref{thm:bound_error_bound} and \ref{thm:absolute_change_bounds} to be upper bounded by ${\|\mat W\odot(\diag(\mat T\vec 1)+\mat T)\|_F}$ in the worst case, i.e., when $\set E_k$ is empty.
Here, $\mat W\in\{0,1\}^{n\times n}$ can be used to select arbitrary submatrices of $\hat{\mat P}_k-\mat P_k$, including individual elements.
Note that $|\set E_k'|$ and thus the number of elements that need to be transmitted is bounded only by the number of elements in the upper triangle of $\mat P_k$.

\subsubsection*{Relative-Change Trigger}
The absolute-change trigger is not invariant with respect to the absolute scale of the covariance matrices intended for transmission. 
That is, the same set of thresholds results in different amounts of data being transmitted even for covariance matrix sequences that are identical in all but scale.
This makes it challenging to select appropriate thresholds that work well for the variety of covariance matrix sequences that can occur even within the same application.
The 'relative-change trigger' addresses this concern by incorporating an index-value pair $((i,j),[\mat P_k]_{ij})$ into $\set E_k$ when the condition $|[\mat P_k-\tilde{\mat P}_{k-1}]_{ij}|>[\mat T]_{ij}|[\tilde{\mat P}_{k-1}]_{ij}|$ is met. 
Here, the trigger operates based on the relative deviations of the covariance matrix elements from the transmitter buffer elements exceeding predefined non-negative thresholds $\mat T\in\set S_n$.
As the trigger focuses on relative changes, the scaling of the covariance matrix sequence does not affect the frequency of triggered events.
\begin{theorem}[Relative-Change Bounds]\label{thm:relative_change_bounds}
    Using the relative-change trigger, bounds on possible buffer errors are given by
    \begin{align*}
        [\hat{\mat\Delta}_k]_{ij} = 
        \begin{cases}
            [\mat T]_{ij}|[\tilde{\mat P}_k]_{ij}| & ,\,((i,j),\cdot)\notin\set E_k\\
            0 & ,\,\text{otherwise}
        \end{cases},
    \end{align*}
    where $\mat T\in\set S_n$ are the non-negative trigger thresholds. 
\end{theorem}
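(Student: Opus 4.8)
The plan is to split into the two cases appearing in the claimed formula and, in each, directly bound the buffer error $[\mat\Delta_k]_{ij}=[\tilde{\mat P}_k-\mat P_k]_{ij}$ elementwise. The key structural fact I would invoke first is how the transmitter buffer is updated: an element is overwritten with the current value exactly when its index-value pair lies in $\set E_k$, and is otherwise left untouched. Hence for $((i,j),\cdot)\in\set E_k$ we have $[\tilde{\mat P}_k]_{ij}=[\mat P_k]_{ij}$, so $[\mat\Delta_k]_{ij}=0$ and the bound $[\hat{\mat\Delta}_k]_{ij}=0$ is (trivially, and tightly) valid; while for $((i,j),\cdot)\notin\set E_k$ we have the crucial identity $[\tilde{\mat P}_k]_{ij}=[\tilde{\mat P}_{k-1}]_{ij}$, which will both give the error bound and let us re-express it in terms of $\tilde{\mat P}_k$ rather than $\tilde{\mat P}_{k-1}$.

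For the second case I would then use the definition of the relative-change trigger: $((i,j),\cdot)\notin\set E_k$ means the triggering inequality failed, i.e. $|[\mat P_k-\tilde{\mat P}_{k-1}]_{ij}|\leq[\mat T]_{ij}\,|[\tilde{\mat P}_{k-1}]_{ij}|$. Combining this with $[\mat\Delta_k]_{ij}=[\tilde{\mat P}_k]_{ij}-[\mat P_k]_{ij}=[\tilde{\mat P}_{k-1}]_{ij}-[\mat P_k]_{ij}=-[\mat P_k-\tilde{\mat P}_{k-1}]_{ij}$ yields $|[\mat\Delta_k]_{ij}|\leq[\mat T]_{ij}\,|[\tilde{\mat P}_{k-1}]_{ij}|$, and substituting $[\tilde{\mat P}_{k-1}]_{ij}=[\tilde{\mat P}_k]_{ij}$ once more gives exactly $|[\mat\Delta_k]_{ij}|\leq[\mat T]_{ij}\,|[\tilde{\mat P}_k]_{ij}|=[\hat{\mat\Delta}_k]_{ij}$. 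Together with the first case this establishes $|\mat\Delta_k|\leq\hat{\mat\Delta}_k$ elementwise, so that $\mat\Delta_k\in\set F_k$ for $\set F_k$ built from this $\hat{\mat\Delta}_k$ as in \eqref{eq:possible_errors}.

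Finally I would check the side conditions required of a valid bound: $\hat{\mat\Delta}_k\in\set S_n$ and $\hat{\mat\Delta}_k\geq\mat 0$. Symmetry follows because $\mat T$ and $\tilde{\mat P}_k$ are symmetric and, by assumption, the trigger selects index-value pairs symmetrically, so membership in $\set E_k$ is symmetric in $(i,j)$; nonnegativity is immediate from $[\mat T]_{ij}\geq 0$ and $|[\tilde{\mat P}_k]_{ij}|\geq 0$. I would also note in passing that $\hat{\mat\Delta}_k$ depends only on $\tilde{\mat P}_k$, $\mat T$, and $\set E_k$, all available at the receiver, as required by the standing assumption on admissible triggers.

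I do not anticipate a genuine obstacle here; the proof is short. The one point that deserves care, and which I would state explicitly rather than gloss over, is the substitution $[\tilde{\mat P}_{k-1}]_{ij}=[\tilde{\mat P}_k]_{ij}$ for non-triggered elements — this is what makes the stated bound (phrased via $\tilde{\mat P}_k$) equivalent to the more ``obvious'' bound phrased via $\tilde{\mat P}_{k-1}$, and it is valid precisely on the index set where the bound is nonzero, so the two formulas never disagree on the elements that matter.
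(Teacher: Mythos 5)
Your proof is correct and follows essentially the same route as the paper's: case split on membership in $\set E_k$, with the transmitted case giving zero error and the non-transmitted case combining the failed triggering inequality with the identity $[\tilde{\mat P}_k]_{ij}=[\tilde{\mat P}_{k-1}]_{ij}$ to obtain the bound in terms of $\tilde{\mat P}_k$. The additional remarks on symmetry and nonnegativity of $\hat{\mat\Delta}_k$ are a harmless (and reasonable) elaboration beyond what the paper states explicitly.
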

\begin{proof}
    See Appendix\,\ref{sec:relative_change_bounds}.
\end{proof}
Given the dependence of the elementwise bounds $\hat{\mat\Delta}_k$ of the relative-change trigger on the buffer matrix, it is not possible to establish a time-invariant error bound as for the absolute-change trigger.
Instead the error bound depends on the value of the buffer: 
\begin{align*}
    \|\mat W\odot(\hat{\mat P}_k-\mat P_k)\|_F &\leq\\
    \|\mat W&\odot(\diag((\mat T\odot|\tilde{\mat P}_k|)\vec 1)+\mat T\odot|\tilde{\mat P}_k|)\|_F.
\end{align*}
Here, as before $\mat W\in\{0,1\}^{n\times n}$ can be used to select arbitrary submatrices of $\hat{\mat P}_k-\mat P_k$ to compute an upper bound on.
Similarly to the absolute-change trigger, no bound but the trivial one can be given on the number of transmitted elements.

\subsubsection*{$N$-Most-Changed Trigger}
There are situations where adherence to a specific data transmission budget is necessary, or where the transmission of solely the most significant elements is of interest. In such instances, the '$N$-most-changed trigger' proves beneficial. It involves the inclusion of the index-value pairs $((i,j),[\mat P_k]_{ij})$ in $\set E_k$ when the absolute deviation $|[\mat P_k-\tilde{\mat P}_{k-1}]_{ij}|$ (or relative deviation $|[\mat P_k-\tilde{\mat P}_{k-1}]_{ij}|/|[\tilde{\mat P}_{k-1}]_{ij}|$) from the transmitter buffer value ranks among the $N$ largest deviations in the upper triangle of $\mat P_k$.
\begin{theorem}[$N$-Most-Changed Bounds]\label{thm:most_changed_bounds}
    Using the $N$-most-changed trigger, bounds on possible buffer errors are given by
    \begin{align*}
        [\hat{\mat\Delta}_k]_{ij} = 
        \begin{cases}
            \hat\Delta_k & ,\,((i,j),\cdot)\notin\set E_k\\
            0 & ,\,\text{otherwise}
        \end{cases},
    \end{align*}
    where $\hat\Delta_k=\min_{((l,m),\cdot)\in\set E_k} |[\tilde{\mat P}_k]_{lm}-[\tilde{\mat P}_{k-1}]_{lm}|$ if the absolute deviation is used as basis of the trigger decision and 
    \begin{align*}
        \hat\Delta_k=|[\tilde{\mat P}_k]_{ij}|\min_{((l,m),\cdot)\in\set E_k} \frac{|[\tilde{\mat P}_k]_{lm}-[\tilde{\mat P}_{k-1}]_{lm}|}{|[\tilde{\mat P}_{k-1}]_{lm}|}
    \end{align*} 
    if the relative deviation is used as basis of the trigger decision.
\end{theorem}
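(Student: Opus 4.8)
The plan is to follow the same template as the proofs of Theorems~\ref{thm:absolute_change_bounds} and \ref{thm:relative_change_bounds}: bound the magnitude of each entry of the buffer error $\mat\Delta_k=\tilde{\mat P}_k-\mat P_k$ in a case distinction on whether the corresponding index-value pair lies in $\set E_k$, and then check that the resulting elementwise bound equals the claimed $\hat{\mat\Delta}_k$ and is expressible purely in terms of the receiver-side data $\tilde{\mat P}_k,\tilde{\mat P}_{k-1},\set E_k$, so that $\set F_k$ has the required form \eqref{eq:possible_errors}.

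First I would dispose of the transmitted entries. If $((i,j),\cdot)\in\set E_k$, the buffer update sets $[\tilde{\mat P}_k]_{ij}=[\mat P_k]_{ij}$, hence $[\mat\Delta_k]_{ij}=0$ and the bound $[\hat{\mat\Delta}_k]_{ij}=0$ is valid; by the assumed symmetry of $\set E_k$ this simultaneously covers the mirrored lower-triangular entries, which is what keeps $\hat{\mat\Delta}_k$ symmetric. For a non-transmitted entry $((i,j),\cdot)\notin\set E_k$ the buffer retains its previous value, $[\tilde{\mat P}_k]_{ij}=[\tilde{\mat P}_{k-1}]_{ij}$, so $|[\mat\Delta_k]_{ij}|=|[\mat P_k-\tilde{\mat P}_{k-1}]_{ij}|$, which is exactly the absolute deviation the trigger ranks (and, divided by $|[\tilde{\mat P}_{k-1}]_{ij}|$, the relative deviation it ranks).

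The crux is the ranking argument. Because the $N$-most-changed trigger transmits precisely those upper-triangular index-value pairs whose deviation is among the $N$ largest, any non-transmitted pair has deviation no larger than the smallest deviation among the transmitted pairs -- even when a tie occurs at the selection boundary the non-transmitted tied entry still satisfies this inequality (with equality). In the absolute case this gives $|[\mat\Delta_k]_{ij}|\leq\min_{((l,m),\cdot)\in\set E_k}|[\mat P_k-\tilde{\mat P}_{k-1}]_{lm}|$, and it remains only to rewrite the right-hand side in buffer terms: for a transmitted pair $(l,m)$ one has $[\tilde{\mat P}_k]_{lm}=[\mat P_k]_{lm}$, so $|[\mat P_k-\tilde{\mat P}_{k-1}]_{lm}|=|[\tilde{\mat P}_k]_{lm}-[\tilde{\mat P}_{k-1}]_{lm}|$, yielding exactly the claimed $\hat\Delta_k$; symmetry of $\tilde{\mat P}_k,\tilde{\mat P}_{k-1}$ and of $\set E_k$ makes the minimum over $\set E_k$ agree with the minimum over its upper-triangular part and makes $\hat{\mat\Delta}_k\in\set S_n$.

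The relative case is identical in structure, with the ranked quantity now the relative deviation: the ranking argument gives $|[\mat P_k-\tilde{\mat P}_{k-1}]_{ij}|/|[\tilde{\mat P}_{k-1}]_{ij}|\leq\min_{((l,m),\cdot)\in\set E_k}|[\tilde{\mat P}_k]_{lm}-[\tilde{\mat P}_{k-1}]_{lm}|/|[\tilde{\mat P}_{k-1}]_{lm}|$, and multiplying through by $|[\tilde{\mat P}_{k-1}]_{ij}|=|[\tilde{\mat P}_k]_{ij}|$ -- the equality holding because a non-transmitted entry's buffer value is unchanged -- produces the stated formula. I expect the only delicate points to be the careful handling of boundary ties in the ranking step and the symmetry bookkeeping needed to land in $\set S_n$; the degenerate situations (fewer than $N$ entries in the upper triangle, so every element is transmitted and $\set F_k=\{\mat 0\}$, or $N=0$) are either vacuous or absorbed by the convention $\min\emptyset=+\infty$.
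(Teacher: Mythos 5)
Your proposal is correct and follows essentially the same route as the paper's proof: zero bound for transmitted entries, the ranking argument bounding each non-transmitted deviation by the minimum deviation over $\set E_k$, and the rewriting of that minimum in terms of $\tilde{\mat P}_k,\tilde{\mat P}_{k-1}$ via $[\tilde{\mat P}_k]_{lm}=[\mat P_k]_{lm}$ for transmitted pairs and $[\tilde{\mat P}_k]_{ij}=[\tilde{\mat P}_{k-1}]_{ij}$ for non-transmitted ones. Your extra remarks on ties at the selection boundary and on degenerate cases are sound additions not spelled out in the paper.
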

\begin{proof}
    See Appendix\,\ref{sec:most_changed_bounds}.
\end{proof}
Note that for the $N$-most-changed trigger, while the cardinality of $\set E_k'$, i.e., the number of elements that need to be transmitted, is always exactly $N$, no upper bound exist for the bound error $\|\mat W\odot(\hat{\mat P}_k-\mat P_k)\|_F$ unless additional assumptions are imposed on the sequence $\mat P_k$.
In contrast, the absolute- and relative-change triggers provide meaningful upper bounds on the bound error, but only the trivial one, i.e., the number of elements in the upper triangle of $\mat P_k$, on the number of transmitted index-value pairs.
Thus, the three triggers allow calibrating two complementary metrics: The bound error and the cardinality of the reduced event-set, i.e., the amount of transmitted data.
The different behaviors of the three proposed triggers are shown in Fig.~\ref{fig:different_triggers_illustration} for exemplary absolute deviations $|\mat P_k-\tilde{\mat P}_{k-1}|$ (relative deviations $|\mat P_k-\tilde{\mat P}_{k-1}|/|\tilde{\mat P}_{k-1}|$ for the relative-change trigger) from the transmitter buffer.

\subsection{Subset and Combined Triggers}
An interesting aspect of the above trigger-bounder framework that has not previously been discussed but follows from the special form of $\set F_k$ is the possibility to apply and even combine the basic (and other) triggers on a per-element basis:
For instance, the transmission of one subset of elements may be decided using an absolute-change trigger, while other subsets may be decided using an $N$-most-changed trigger.
In addition, some elements may always get sent, i.e., not use any trigger in the strict sense.
This is illustrated in Fig.~\ref{fig:mix_match_triggers_illustration}.
\begin{figure}
    \centering
    \definecolor{tab10_blue}{HTML} {1f77b4}
    \definecolor{tab10_orange}{HTML}{ff7f0e}
    \definecolor{tab10_green}{HTML}{2ca02c}
    \begin{tikzpicture}
        \node (mat) at (0,0) {$\begin{bmatrix} 
            [\mat P_k]_{11} & [\mat P_k]_{12} & [\mat P_k]_{13} & [\mat P_k]_{14} & [\mat P_k]_{15}\\[2pt]
            [\mat P_k]_{21} & [\mat P_k]_{22} & [\mat P_k]_{23} & [\mat P_k]_{24} & [\mat P_k]_{25}\\[2pt]
            [\mat P_k]_{31} & [\mat P_k]_{32} & [\mat P_k]_{33} & [\mat P_k]_{34} & [\mat P_k]_{35}\\[2pt]
            [\mat P_k]_{41} & [\mat P_k]_{42} & [\mat P_k]_{43} & [\mat P_k]_{44} & [\mat P_k]_{45}\\[2pt]
            [\mat P_k]_{51} & [\mat P_k]_{52} & [\mat P_k]_{53} & [\mat P_k]_{54} & [\mat P_k]_{55}\\ 
            \end{bmatrix}$};
        \draw[draw=tab10_blue,fill=tab10_blue,fill opacity=0.1] (-3.05,1.2) -- (3.05,1.2) -- (3.05,0.28) -- (-0.65,0.28) -- (-0.65,-1.19) -- (-3.05,-1.19) --cycle;
        \draw[draw=tab10_green,fill=tab10_green,fill opacity=0.1] (-0.475,0.215) -- (3.05,0.215) -- (3.05,-1.19) -- (-0.475, -1.19) --cycle;
        \draw[draw=tab10_orange,fill=tab10_orange,fill opacity=0.1] (0.65,1.225) -- (3.075,1.225) -- (3.075,0.255) -- (0.65, 0.255) --cycle;
        \draw[draw=tab10_orange,fill=tab10_orange,fill opacity=0.1] (-0.62,-1.22) -- (-3.08,-1.22) -- (-3.08,-0.24) -- (-0.62, -0.24) --cycle;
    \end{tikzpicture}
    \caption{Per-element application of triggers. Blue: absolute-change trigger. Orange and green: $N$-most-changed triggers for disjoint subsets of elements. Blue/orange overlap corresponds to a combined trigger.\label{fig:mix_match_triggers_illustration}}
\end{figure}
Triggers that apply only to a subset of elements in the covariance matrix are refered to as 'subset triggers' and the simultaneous application of multiple triggers to the same element is referred to as a 'combined trigger'.

\subsubsection*{Subset Triggers}
In the following, subset triggers, i.e., triggering mechanisms that are only applied to a subset of covariance matrix elements, are considered.
Such a subset of elements is defined by a subset of element indexes $\set G\subseteq\{1,\ldots,n\}^2$, where it is assumed that $(i,j)\in\set G\iff(j,i)\in\set G$.
In particular, the absolute- and relative-change triggers can be applied to a subset of elements simply by using their respective elementwise bounds $[\hat{\mat\Delta}_k]_{ij}$ only for the elements at indexes $(i,j)\in G$.
The elementwise bounds $[\hat{\mat\Delta}_k]_{ij}$ at indexes $(i,j)\notin\set G$ must then be defined in some other way, e.g., using the elementwise bounds of another subset trigger.
This follows immediately from the proofs of Theorems~\ref{thm:absolute_change_bounds} and \ref{thm:relative_change_bounds}, which are constructed in a strictly elementwise fashion.
More generally, any triggering mechanism where the triggering decision for the covariance matrix element at $(i,j)\in \set G$ depends only on the buffer and covariance matrix element at that same index, can be used as a subset trigger in this simple way.

As can easily be seen, this condition does not hold in the case of the $N$-most-changed trigger applied to a subset of elements, which would include the $N$ index-value pairs $((i,j),[\mat P_k]_{ij})$ in $\set E_k$ whose absolute (or relative) deviation from the respective buffer value ranks among the $N$ largest in the overlap of the upper triangle of $\mat P_k$ with $\set G$.
Nonetheless, the application of this triggering mechanism to a subset of covariance matrix elements is only slightly more involved than for the other mechanisms, as the following theorem shows.
\begin{theorem}[Subset-$N$-Most-Changed Bounds]\label{thm:subset_most_changed_bounds}
    Using the $N$-most-changed trigger for elements indexed by $\set G$, bounds on the the possible buffer errors at $(i,j)\in\set G$ are given by
    \begin{align*}
        [\hat{\mat\Delta}_k]_{ij} =
        \begin{cases}
            \hat\Delta_k & ,\,((i,j),\cdot)\notin\set E_k\\
            0 & ,\,\text{otherwise}
        \end{cases},
    \end{align*}
    where, if the absolute deviation is used for the trigger decision,
    \begin{align*}
        \hat\Delta_k &= \min_{\substack{((l,m),\cdot)\in\set E_k\\(l,m)\in\set G}} |[\tilde{\mat P}_k]_{lm}-[\tilde{\mat P}_{k-1}]_{lm}|
    \end{align*}
    and, if the relative deviation is used for the trigger decision,
    \begin{align*}
        \hat\Delta_k &= |[\tilde{\mat P}_k]_{ij}|\min_{\substack{((l,m),\cdot)\in\set E_k\\(l,m)\in\set G}} \frac{|[\tilde{\mat P}_k]_{lm}-[\tilde{\mat P}_{k-1}]_{lm}|}{|[\tilde{\mat P}_{k-1}]_{lm}|}.
    \end{align*}
\end{theorem}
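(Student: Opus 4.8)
The plan is to run the same strictly elementwise argument that underlies Theorem~\ref{thm:most_changed_bounds}, but confined to the restricted index set $\set G$. The one structural fact I will use repeatedly is the buffer-refresh identity: for every transmitted index $((l,m),\cdot)\in\set E_k$ the transmitter buffer is overwritten with the true value, so $[\tilde{\mat P}_k]_{lm}=[\mat P_k]_{lm}$, and hence the receiver-computable quantity $|[\tilde{\mat P}_k]_{lm}-[\tilde{\mat P}_{k-1}]_{lm}|$ equals the transmitter-side absolute deviation $|[\mat P_k-\tilde{\mat P}_{k-1}]_{lm}|$ (and, after dividing by $|[\tilde{\mat P}_{k-1}]_{lm}|$, the relative deviation as well). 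This is also what makes $\hat\Delta_k$ computable from the data $\tilde{\mat P}_k,\tilde{\mat P}_{k-1},\set E_k$ available at the receiver.

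First I would dispatch the case $(i,j)\in\set G$ with $((i,j),\cdot)\in\set E_k$: the buffer carries the true element, so $[\mat\Delta_k]_{ij}=[\tilde{\mat P}_k]_{ij}-[\mat P_k]_{ij}=0$ and $[\hat{\mat\Delta}_k]_{ij}=0$ is a valid (indeed tight) bound. For $(i,j)\in\set G$ with $((i,j),\cdot)\notin\set E_k$, the buffer entry is unchanged, $[\tilde{\mat P}_k]_{ij}=[\tilde{\mat P}_{k-1}]_{ij}$, so $|[\mat\Delta_k]_{ij}|=|[\mat P_k-\tilde{\mat P}_{k-1}]_{ij}|$. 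Since this index was not selected, its absolute deviation does not rank among the $N$ largest within the intersection of the upper triangle of $\mat P_k$ with $\set G$; therefore it is no larger than the smallest absolute deviation among the selected indices in that region, i.e. $|[\mat P_k-\tilde{\mat P}_{k-1}]_{ij}|\le\min_{((l,m),\cdot)\in\set E_k,\,(l,m)\in\set G}|[\mat P_k-\tilde{\mat P}_{k-1}]_{lm}|$. Applying the buffer-refresh identity to each selected deviation rewrites the right-hand side as exactly $\hat\Delta_k$, giving $|[\mat\Delta_k]_{ij}|\le\hat\Delta_k=[\hat{\mat\Delta}_k]_{ij}$. The relative-deviation case is the same up to dividing through by $|[\tilde{\mat P}_{k-1}]_{ij}|$ before invoking non-selection and then re-multiplying: non-selection yields $|[\mat P_k-\tilde{\mat P}_{k-1}]_{ij}|/|[\tilde{\mat P}_{k-1}]_{ij}|\le\min_{((l,m),\cdot)\in\set E_k,\,(l,m)\in\set G}|[\mat P_k-\tilde{\mat P}_{k-1}]_{lm}|/|[\tilde{\mat P}_{k-1}]_{lm}|$, and since $[\tilde{\mat P}_{k-1}]_{ij}=[\tilde{\mat P}_k]_{ij}$ for the unchanged entry, multiplying by $|[\tilde{\mat P}_k]_{ij}|$ recovers the claimed $\hat\Delta_k$. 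I would finish by noting that on $\set G$ the values $[\hat{\mat\Delta}_k]_{ij}$ are non-negative and, because $\set E_k$, $\set G$, and $|\tilde{\mat P}_k|$ are symmetric, symmetric, so the $\set G$-block is consistent with $\set F_k$ having the form~\eqref{eq:possible_errors}; the entries at indices outside $\set G$ are supplied by the other (subset) triggers.

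The only genuinely delicate point — the part I expect to need care — is the precise reading of ``ranks among the $N$ largest'' in the presence of ties and in the degenerate case where the intersection of the upper triangle of $\mat P_k$ with $\set G$ has fewer than $N$ elements. With ties, any fixed tie-breaking rule is admissible provided every non-selected index in that intersection has deviation no larger than every selected one, which is precisely the inequality used above; in the under-populated case all such indices are transmitted, so no non-selected $(i,j)\in\set G$ in the upper triangle remains and the zero branch covers everything. (For the relative variant one also tacitly assumes the relevant buffer entries are nonzero, exactly as for the relative-change trigger.) Beyond this bookkeeping, the proof is the elementwise one already used for Theorem~\ref{thm:most_changed_bounds}, now simply localized to $\set G$.
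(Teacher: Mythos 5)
Your proposal is correct and follows essentially the same route as the paper: it reuses the elementwise argument of Theorem~\ref{thm:most_changed_bounds}, replacing only the observation that a non-selected $(i,j)\in\set G$ cannot rank among the $N$ largest deviations \emph{within the intersection of the upper triangle with $\set G$}, and then applying the buffer-refresh identity to rewrite the minimum in receiver-computable form. Your extra remarks on tie-breaking and the under-populated case are sound bookkeeping that the paper leaves implicit.
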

\begin{proof}
    See Appendix\,\ref{sec:subset_most_changed_bounds}.
\end{proof}
Practically, this theorem demonstrates that the $N$-most-changed trigger (be it using absolute deviation or relative deviation), can be applied to a subset of covariance matrix elements, simply by replacing the minimum over all indexes in the computation of $\hat\Delta_k$ with a minimum over indexes in $\set G$.

\subsubsection*{Combined Triggers}
Combined triggers occur if several triggers are simultaneously applied to the same covariance matrix element.
This means that their triggering conditions must be simultaneously satisfied to trigger the transmission of the element they apply to.
Note that it is irrelevant if the involved triggers are subset triggers or not.
The next theorem gives a general way to construct the elementwise bounds $\hat{\mat\Delta}$ for a combined trigger from those of the constituent triggers.
Any triggering mechanisms that lead to the respective sets of possible buffer errors taking the form \eqref{eq:possible_errors}, not only those presented in this paper, can be combined in this way.
\begin{theorem}[Combined Bounds]\label{thm:combined_bounds}
    Let $[\hat{\mat\Delta}_{k,l}]_{ij}$ be the bounds on the possible buffer error at $(i,j)$ associated with the triggers $l=1,\ldots,m$ that are applied to the element at $(i,j)$. Then a bound on the possible buffer error at $(i,j)$ is given by 
    \begin{align*}
        [\hat{\mat\Delta}_k]_{ij} &= 
        \begin{cases}
            \max_{l=1,\ldots,m} [\hat{\mat\Delta}_{k,l}]_{ij} & ,\,((i,j),\cdot)\notin\set E_k\\
            0 & ,\,\text{otherwise}
        \end{cases}.
    \end{align*}
\end{theorem}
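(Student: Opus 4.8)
The plan is to show that the proposed $[\hat{\mat\Delta}_k]_{ij}$ yields a valid set of possible buffer errors of the form \eqref{eq:possible_errors}, i.e., that the true buffer error $\mat\Delta_k$ satisfies $|[\mat\Delta_k]_{ij}|\leq[\hat{\mat\Delta}_k]_{ij}$ for every index $(i,j)$. Since \eqref{eq:possible_errors} is an entirely elementwise condition, it suffices to argue index by index. Fix an index $(i,j)$ to which triggers $l=1,\ldots,m$ are applied. First I would dispose of the case $((i,j),\cdot)\in\set E_k$: if the element was transmitted, the buffer is updated to the exact value, so $[\mat\Delta_k]_{ij}=0$, and setting $[\hat{\mat\Delta}_k]_{ij}=0$ is trivially a valid bound. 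This matches the second branch of the claimed formula.

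The substantive case is $((i,j),\cdot)\notin\set E_k$. Here I would use the defining property of a combined trigger: the element at $(i,j)$ is transmitted precisely when \emph{all} of the constituent triggering conditions are satisfied. Contrapositively, if the element is \emph{not} transmitted, then \emph{at least one} of the triggers $l$ did not fire. Say trigger $l^\star$ did not fire. By the hypothesis of the theorem, $[\hat{\mat\Delta}_{k,l^\star}]_{ij}$ is a valid elementwise bound on the possible buffer error at $(i,j)$ associated with trigger $l^\star$; since trigger $l^\star$ did not fire for this element, the error at this index is constrained by $|[\mat\Delta_k]_{ij}|\leq[\hat{\mat\Delta}_{k,l^\star}]_{ij}$. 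Because $l^\star$ is one of the indices $1,\ldots,m$, we immediately get $|[\mat\Delta_k]_{ij}|\leq[\hat{\mat\Delta}_{k,l^\star}]_{ij}\leq\max_{l=1,\ldots,m}[\hat{\mat\Delta}_{k,l}]_{ij}$, which is exactly the first branch of the claimed formula. Note that the $\max$ (rather than $\min$) is essential and natural here: we do not know \emph{which} trigger failed to fire, only that some did, so we must take the largest of the individual bounds to be safe — this is the conceptual crux of the argument, and worth stating explicitly.

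There is one subtlety I would be careful about, which is really the only potential obstacle: the individual bounds $[\hat{\mat\Delta}_{k,l}]_{ij}$ are stated in the earlier theorems as functions of the event-set $\set E_k$ (e.g., for the $N$-most-changed trigger, $\hat\Delta_k$ depends on a minimum over transmitted deviations), so one must check that combining triggers does not invalidate the derivation of each individual $[\hat{\mat\Delta}_{k,l}]_{ij}$. I would argue that this is fine because the hypothesis of the present theorem takes the $[\hat{\mat\Delta}_{k,l}]_{ij}$ as given valid bounds for whatever event-set actually occurs; the combined trigger simply produces a (possibly smaller) event-set, and each constituent bound is, by assumption, valid with respect to that realized event-set. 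Finally, I would remark that since each $[\hat{\mat\Delta}_{k,l}]_{ij}$ is non-negative and depends only on $\tilde{\mat P}_k,\tilde{\mat P}_{k-1},\set E_k$, so does their elementwise maximum, hence the resulting $\hat{\mat\Delta}_k\in\set S_n$ is non-negative and computable at the receiver, so $\set F_k$ retains the required form \eqref{eq:possible_errors} and the bounder of Sec.~\ref{sec:bounder_cov} applies verbatim.
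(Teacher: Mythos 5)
Your proof is correct and follows essentially the same route as the paper's: the transmitted case gives zero error trivially, and in the non-transmitted case the contrapositive of the combined triggering condition guarantees that at least one constituent trigger did not fire, so its bound applies and is dominated by the elementwise maximum. Your additional remark about the constituent bounds remaining valid for the (possibly smaller) realized event-set is a subtlety the paper's proof glosses over, and addressing it is a welcome refinement rather than a deviation.
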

\begin{proof}
    See Appendix\,\ref{sec:combined_bounds}.
\end{proof}
While the above theorem is always applicable it may give more conservative elementwise bounds than necessary.
For instance, by combining an absolute-change and an (absolute-deviation-based) $N$-most-changed trigger\footnote{The following discussion applies analogously to the combination of a relative-change and a relative-deviation-based $N$-most-changed trigger.}, it is possible to create a combined trigger that behaves like an absolute-change trigger, but only transmits the up to $N$ most changed elements (from the upper triangle of $\mat P_k$) satisfying the absolute-change condition.
This effectively limits the number of index-value pairs that need to be transmitted to a specified maximum similarly to the $N$-most-changed trigger.
While elementwise bounds for this particular combination can be derived using Theorem~\ref{thm:combined_bounds}, the following theorem provides better bounds.
\begin{theorem}[Absolute-Change + $N$-Most-Changed Bounds]\label{thm:achange_nmost_bounds}
    Combining an absolute-change and an (absolute-deviation-based) $N$-most-changed trigger, bounds on the possible buffer errors are given by
    \begin{equation*}
        [\hat{\mat\Delta}_k]_{ij}=
        \begin{cases}
            \hat\Delta_k & ,\,((i,j),\cdot)\notin\set E_k\text{ and }|\set E_k'|=N\\
            [\mat T]_{ij} & ,\,((i,j),\cdot)\notin\set E_k\text{ and }|\set E_k'|<N\\
            0 & ,\,\text{otherwise}
        \end{cases},
    \end{equation*}
    where $\hat\Delta_k=\min_{((l,m),\cdot)\in\set E_k} |[\tilde{\mat P}_k]_{lm}-[\tilde{\mat P}_{k-1}]_{lm}|$.
\end{theorem}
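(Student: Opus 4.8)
The plan is to verify directly that the set of possible buffer errors has the elementwise form~\eqref{eq:possible_errors} with the stated $\hat{\mat\Delta}_k$, by bounding $|[\mat\Delta_k]_{ij}|=|[\tilde{\mat P}_k-\mat P_k]_{ij}|$ from the receiver's information $(\tilde{\mat P}_k,\tilde{\mat P}_{k-1},\set E_k)$ via a case split on whether $(i,j)$ is transmitted and, if not, on whether $\set E_k'$ is full. The underlying transmission rule is that $((i,j),\cdot)\in\set E_k$ iff the absolute-change condition $|[\mat P_k-\tilde{\mat P}_{k-1}]_{ij}|>[\mat T]_{ij}$ holds \emph{and} $(i,j)$ is among the $N$ upper-triangle indices with largest absolute deviation among those satisfying that condition. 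Symmetry of the selection lets me argue for one index of each pair $\{(i,j),(j,i)\}$, which makes the resulting $\hat{\mat\Delta}_k$ symmetric, and non-negativity of $\mat T$ and of absolute deviations makes it non-negative, so it does lie in $\set S_n$.

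Two of the three branches are immediate. If $((i,j),\cdot)\in\set E_k$, the buffer update forces $[\tilde{\mat P}_k]_{ij}=[\mat P_k]_{ij}$, hence $[\mat\Delta_k]_{ij}=0$, matching the last branch. If $((i,j),\cdot)\notin\set E_k$ and $|\set E_k'|<N$, then strictly fewer than $N$ upper-triangle elements can satisfy the absolute-change condition (otherwise exactly $N$ would be sent), so every element satisfying it is transmitted; therefore $(i,j)$ must violate it, giving $|[\mat P_k-\tilde{\mat P}_{k-1}]_{ij}|\le[\mat T]_{ij}$, and since the buffer is not updated at $(i,j)$ this is exactly $|[\mat\Delta_k]_{ij}|\le[\mat T]_{ij}$ — the same elementwise argument as in Theorem~\ref{thm:absolute_change_bounds}. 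This gives the second branch.

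The crux is the full case $((i,j),\cdot)\notin\set E_k$ with $|\set E_k'|=N$, where two exclusion mechanisms are possible and the receiver cannot tell them apart: either $(i,j)$ violates the absolute-change condition, again yielding $|[\mat\Delta_k]_{ij}|\le[\mat T]_{ij}$; or $(i,j)$ satisfies it but is not among the $N$ largest such deviations, in which case its deviation is bounded by the smallest transmitted deviation, i.e.\ $|[\mat\Delta_k]_{ij}|=|[\mat P_k-\tilde{\mat P}_{k-1}]_{ij}|\le\min_{((l,m),\cdot)\in\set E_k}|[\tilde{\mat P}_k]_{lm}-[\tilde{\mat P}_{k-1}]_{lm}|=\hat\Delta_k$, using $[\tilde{\mat P}_k]_{lm}=[\mat P_k]_{lm}$ on $\set E_k$ — the argument behind Theorem~\ref{thm:most_changed_bounds}. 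To collapse both alternatives into the single bound $\hat\Delta_k$ one argues that $|\set E_k'|=N$ can only occur once $N$ elements have each strictly exceeded their thresholds, so $\hat\Delta_k$ is not smaller than the threshold governing the first alternative; hence $|[\mat\Delta_k]_{ij}|\le\hat\Delta_k$ under either mechanism.

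Assembling the three cases yields $\set F_k$ of the form~\eqref{eq:possible_errors} with the claimed $\hat{\mat\Delta}_k$; it is never looser than, and typically strictly tighter than, the bound obtained by feeding Theorems~\ref{thm:absolute_change_bounds} and~\ref{thm:most_changed_bounds} into the generic combination rule of Theorem~\ref{thm:combined_bounds}, which is the point of stating it separately. The step I expect to be the genuine obstacle is exactly that collapse in the full case: making rigorous that the $\hat\Delta_k$ branch already absorbs the $[\mat T]_{ij}$ bound incurred when the absolute-change condition fails, since it is this coupling between ``$\set E_k'$ is full'' and ``the triggered deviations have overtaken the thresholds'' that lets one avoid the cruder $\max([\mat T]_{ij},\hat\Delta_k)$ of a naive elementwise combination.
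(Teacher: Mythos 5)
Your proof is correct and follows essentially the same route as the paper's: the same case split on whether $(i,j)$ is transmitted and on $|\set E_k'|<N$ versus $|\set E_k'|=N$, reducing the first two branches to the absolute-change argument of Theorem~\ref{thm:absolute_change_bounds} and the full branch to the $N$-most-changed argument of Theorem~\ref{thm:most_changed_bounds}. The ``collapse'' you single out as the crux --- that $\hat\Delta_k\geq[\mat T]_{ij}$ because every transmitted element strictly exceeded its threshold --- is exactly the observation the paper makes in the remark immediately following the theorem statement (its appendix proof leaves this implicit), so your handling of the $|\set E_k'|=N$ case is, if anything, slightly more explicit than the paper's.
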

\begin{proof}
    See Appendix\,\ref{sec:achange_nmost_bounds}.
\end{proof}
The above bounds on the possible buffer errors are better than those given by Theorem~\ref{thm:combined_bounds} because $\hat\Delta_k\geq[\mat T]_{ij}$ for any $((i,j),\cdot)\notin\set E_k$: If this was not the case an element would have been sent that has absolute deviation from its buffer value of less than $[\mat T]_{ij}$ -- a contradiction to the triggering conditions.

Note that the particular combination of an absolute-change and (absolute-deviation-based) $N$-most-changed trigger provides a somewhat limited upper bound on the bound error $\|\mat W\odot(\hat{\mat P}_k-\mat P_k)\|_F$, where $\mat W\in\{0,1\}^{n\times n}$ is used to select a submatrix of $\hat{\mat P}_k-\mat P_k$. 
This can be seen easily as follows: If less than $N$ index-value pairs were received $\hat{\mat\Delta}$ is essentially the same as for the absolute-change trigger. 
Therefore the same error bound $\|\mat W\odot(\hat{\mat P}_k-\mat P_k)\|_F\leq\|\mat W\odot(\diag(\mat T\vec 1)+\mat T)\|_F$ applies.
If $N$ index-value pairs were received $\hat{\mat\Delta}$ is identical to the $N$-most-changed trigger and no error bound can be given.

\subsection{Learning Thresholds from Application-Specific Data}\label{sec:optimal_threshs}
Learning trigger thresholds $\mat T$ from application-specific data aims to optimize the average of the transmitted data volume $|\set E_k'|$ and the average conservativeness of the bound $\hat{\mat P}_k$ at the receiver, in a particular application.
Here, similarly to \eqref{eq:bound_lp}, the conservativeness of a bound $\hat{\mat P}$ is measured in terms of its trace but is normalized to be in the range $[0,\infty)$.
The resulting measure of conservativeness $\trace(\hat{\mat P}-\mat P)/\trace(\mat P)$ is termed the 'relative conservativeness' of the upper bound $\hat{\mat P}$ with respect to the covariance matrix $\mat P$.
With these preliminaries, the dual aim pursued by learning thresholds is captured by the objective function ${L:\set S_n\rightarrow[0,\infty)}$ that is defined as
\begin{align}
    L(\mat T) = \sum_{(\mat P_{i,k})_{k=1}^{l_i}\in\set D}\frac{1}{l_i}\sum_{k=1}^{l_i}L_{i,k}^d(\mat T)+\lambda L_{i,k}^{rc}(\mat T)\label{eq:learning_objective}
\end{align}
and consists of 'data' objective terms $L_{i,k}^d$ and 'relative conservativeness' objective terms $L_{i,k}^{rc}$, with the tradeoff between the two determined by the parameter $\lambda\geq 0$.
Here, $\set D$ denotes a dataset of covariance matrix sequences from the intended application and $(\mat P_{i,k})_{k=1}^{l_i}$, $i=1,\ldots,|\set D|$ denote covariance matrix sequences of lengths $l_i$ from that dataset.
The 'data' terms $L_{i,k}^d$ count the number of elements transmitted in timestep $k$ of covariance matrix sequence $i$ and are given by
\begin{align*}
    L_{i,k}^d(\mat T) &= \sum_{\substack{l,m=1\\l\geq m}}^n [\mat\Gamma(\mat P_{i,k}-\tilde{\mat P}_{i,k},\mat T)]_{lm},
\end{align*}
where $[\mat\Gamma(\cdot)]_{lm}$ is one if the index-value pair $((l,m),[\mat P_{i,k}]_{lm})$ was transmitted and zero otherwise, and $\tilde{\mat P}_{i,k}$ is the buffer value associated with the $k$-th timestep of the $i$-th sequence.
The 'relative conservativeness' terms quantify the conservativeness of the bounder output $\hat{\mat P}$ (which depends on buffer matrices $\tilde{\mat P}_{i,k}$, $\tilde{\mat P}_{i,k-1}$, and thresholds $\mat T$) and are given by
\begin{align*}
    L_{i,k}^{rc}(\mat T) &= \sum_{\substack{l,m=1\\l\geq m}}^n \frac{\trace(\hat{\mat P}(\tilde{\mat P}_{i,k},\tilde{\mat P}_{i,k-1},\mat T)-\mat P_{i,k})}{\trace(\mat P_{i,k})}.
\end{align*}
Note that in all objective terms, $\hat{\mat P}_{i,k}=\hat{\mat P}(\tilde{\mat P}_{i,k},\tilde{\mat P}_{i,k-1},\mat T)$ and $\tilde{\mat P}_{i,k}$ are determined from the sequence $(\mat P_{i,k})_{k=1}^{l_i}$ using the proposed approach and the particular trigger considered.

Unfortunately, minimizing \eqref{eq:learning_objective} is not trivial: The function $\mat\Gamma(\cdot,\cdot,\cdot)$ is not continuous, which prevents optimization using gradient-based methods.
However, the threshold matrix $\mat T$ can be determined using a grid search for the $\mat T^*\in\set S_n$ that attains the lowest objective function value.
Since this approach quickly becomes computationally infeasible, even for low-dimensional covariance matrices, here, only the case where $\mat T=T\mat 1$, with $T$ a scalar threshold parameter, is considered.

\section{Results and Discussion}
In this section, the proposed event-triggered covariance matrix sequence transmission approach is evaluated in terms of its ability to reduce the amount of transmitted data and to produce bounds on the original covariance matrix that are not overly conservative.
In contrast to \cite{funk_conservative_2023}, instead of a single synthetic scenario, several scenarios arising from an extended Kalman filter (EKF) applied to real-world vehicle trajectories are considered for the evaluation of the approach.
The construction of these scenarios is described first, followed by an evaluation of the approach with the parameters of the used triggers selected manually.
Finally, results for the learning of trigger thresholds, as introduced in Sec.~\ref{sec:optimal_threshs}, are presented.

\subsection{Covariance Matrix Sequences from InD Dataset}
The dataset used in the following experiments, termed 'InD-EKF', is derived from the InD dataset \cite{bock_ind_2020} and consists of covariance matrix sequences as would occur in the centralized fusion of multiple EKF-based vehicle tracking estimates.
InD, which InD-EKF is based on, is a dataset of naturalistic vehicle trajectories (including positions, velocities, accelerations, orientations) recorded at four German intersections using a drone.
It comprises 11500 trajectories of cars, trucks, busses, pedestrians, and bicycles recorded at a rate of 25Hz with a typical position error of less than 10cm.
For the derivation of the 'InD-EKF' dataset only car trajectories with a length of at most 3000 timesteps (which excludes most stationary vehicles) are considered, resulting in a total of 7088 trajectories remaining.
The vehicle positions contained in each trajectory are additively disturbed with i.i.d. zero-mean Gaussian noise with variance $\sigma_p^2=0.1\,\textrm{m}^2$.
The disturbed positions are then processed by an EKF employing a vehicle model given by 
\begin{align}
    \begin{bmatrix}
        x_{k+1}      \\
        y_{k+1}      \\
        \theta_{k+1} \\
        v_{k+1}      \\
        \omega_{k+1}
    \end{bmatrix}
     & =
    \begin{bmatrix}
        x_k      \\
        y_k      \\
        \theta_k \\
        v_k      \\
        \beta\omega_k
    \end{bmatrix}+
    t_s\begin{bmatrix}
           v_k \cos\theta_k \\
           v_k \sin\theta_k \\
           \omega_k         \\
           0                \\
           0
       \end{bmatrix}+
    \begin{bmatrix}
        w_{x,k}      \\
        w_{y,k}      \\
        w_{\theta,k} \\
        w_{v,k}      \\
        w_{\omega,k}
    \end{bmatrix},\label{eq:vehicle_model_ekf}
\end{align}
where $x_k$, $y_k$ is the vehicle position, $\theta_k$ the heading angle,  $v_k$ the longitudinal velocity, $\omega_k$ the turn rate, $l=2.5\,\mathrm{m}$ the wheelbase, $\beta=0.9$ a parameter determining the decay of the turn rate, and $t_s=1/25\,\mathrm{s}$ the sampling period.
The terms $w_{x,k}$, $w_{y,k}$, $w_{\theta,k}$, $w_{v,k}$, and $w_{\omega,k}$ are zero-mean Gaussian noise with variances $\sigma_x^2=\sigma_y^2=10^{-3}\,\mathrm{m}^2$, $\sigma_\theta^2=10^{-3}\,\mathrm{rad}^2$, $\sigma_v^2=10^{-2}\,\frac{\mathrm{m}^2}{\mathrm{s}^2}$, and $\sigma_\omega^2=10^{-2}\,\frac{\mathrm{rad}^2}{\mathrm{s}^2}$.
The EKF uses a position measurement model which is given by
\begin{align}
    \vec z_k = \begin{bmatrix}
                   x_k \\
                   y_k
               \end{bmatrix}+
    \begin{bmatrix}
        v_{x,k} \\
        v_{y,k}
    \end{bmatrix},\label{eq:meas_model}
\end{align}
where $v_{x,k}$ and $v_{y,k}$ are zero-mean Gaussian noise terms with variance $\sigma_p^2=0.1\,\textrm{m}^2$.
The EKF estimate is initialized to $\hat{\vec x}_1=[[\vec z_1]_1,[\vec z_1]_2,0,0,0]^\top$, i.e., to the first available disturbed trajectory position, and its error covariance matrix to $\mat P_1=\diag(\sigma_p^2,\sigma_p^2,1,1,1)$.
The covariance matrix sequences obtained by the EKF constitute the InD-EKF dataset used for the following experiments.
A random 80-20 split of the entire dataset results in the InD-EKF training and test datasets.

\subsection{Manually Selected Trigger Parameters}\label{sec:manual_threshs_result}
\begin{figure*}
    \centering
    \includegraphics{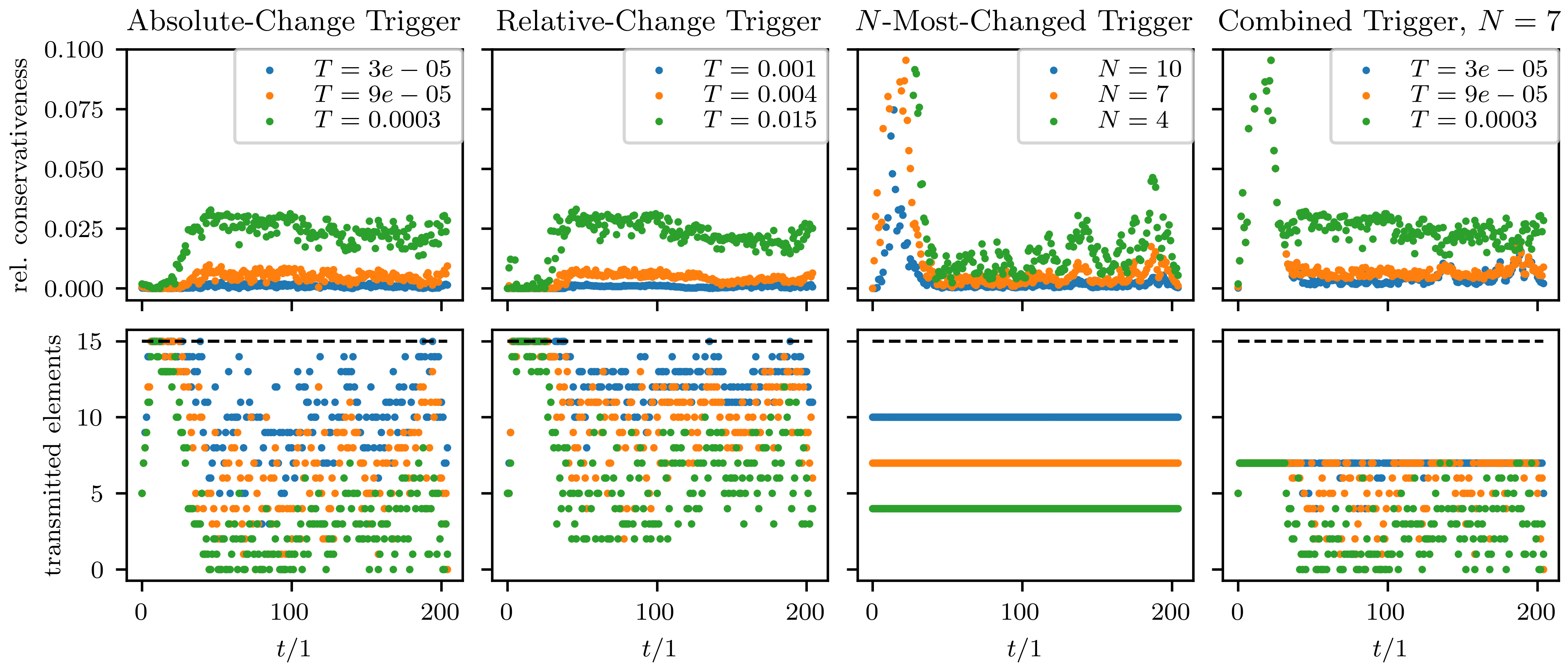}
    \caption{Relative conservativeness and number of transmitted elements of the absolute-change, relative-change, $N$-most-changed, and combined absolute-change and (absolute-deviation-based) $N$-most-changed trigger over time when applied to a single covariance matrix sequence from the InD-EKF dataset. The maximum number of elements in the upper triangle of the covariance matrices to be transmitted is shown as a dashed black line.\label{fig:single_sequence}}
\end{figure*}
\begin{figure*}
    \centering
    \includegraphics{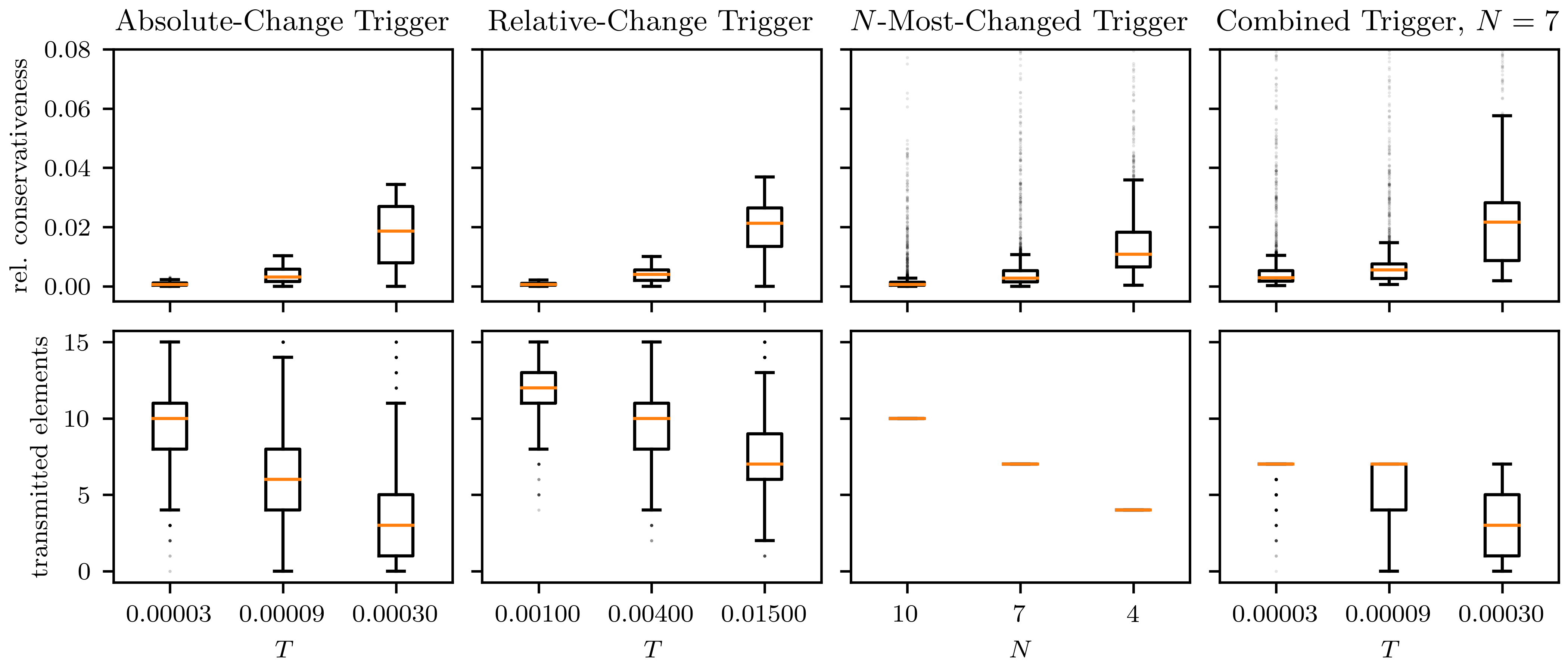}
    \caption{Summary statistics of relative conservativeness and number of transmitted elements of the absolute-change, relative-change, $N$-most-changed, and combined absolute-change and (absolute-deviation-based) $N$-most-changed trigger when applied to all test set covariance matrix sequences from the InD-EKF dataset. Outliers are shown as black dots with intensity proportional to their frequency of occurence.\label{fig:all_sequences}}
\end{figure*}
The initial series of experiments conducted focused on the qualitative characteristics of the proposed methodology when used with the different triggers and various parameter values. 
Specifically, a single covariance matrix sequence extracted from the InD-EKF test dataset was selected. 
Subsequently, the event-triggered transmission approach was applied to this sequence, and the relative conservativeness, as introduced in \ref{sec:optimal_threshs}, and the number of transmitted elements were recorded for each timestep.
For the absolute-change trigger, relative-change trigger, and the combined trigger (absolute-change trigger + absolute-deviation-based $N$-most-changed trigger), the threshold parameters were chosen uniformly, i.e., $\mathbf{T} = T\mathbf{1}$, where $T$ is a shared threshold value. 
The threshold values considered for absolute-change and combined triggers were $T=3 \cdot 10^{-5}$, $T=9 \cdot 10^{-5}$, and $T=3 \cdot 10^{-4}$, while those for the relative-change trigger were $T=0.001$, $T=0.004$, and $T=0.015$.
Regarding the $N$-most-changed trigger, the maximum numbers of transmitted elements considered were $N=4$, $N=7$, and $N=10$, while for the combined trigger only $N=7$ was considered.
In the subsequent set of experiments, the proposed methodology was systematically applied to all covariance matrix sequences within the InD-EKF test dataset. 
The relative conservativeness and the number of transmitted elements for each individual transmission were recorded, allowing for quantitative assessments of the approach's performance. 
The parameter values employed in these experiments mirrored those used in the qualitative analysis. 

\subsubsection*{Qualitative Results}
The recorded data pertaining to the relative conservativeness and the number of transmitted elements for the initial set of experiments are depicted in Fig.~\ref{fig:single_sequence}. 
It is evident from the figure that the relative conservativeness remains consistently low for both absolute- and relative-change triggers throughout most of the sequence, with a slight decrease in the initial part. 
Notably, relative conservativeness values approaching zero indicate that the bounds generated by the proposed approach have traces close to those of the original covariance matrices, implying only marginal over-conservativeness.
Due to positive semidefiniteness of the involved matrices, this also implies that the bound itself is close to the original covariance matrix\footnote{
Let $\trace(\hat{\mat P}-\mat P)=0$. This implies that the diagonal of $\hat{\mat P}-\mat P$ must be zero. The only PSD matrix with identically zero diagonal is the zero matrix.}.
The initially reduced relative conservativeness can be attributed to the rapid decrease in the initial EKF covariance matrix as new disturbed position measurements become available. 
This results in the transmission of all covariance matrix elements, as observed in the figure, leading to tight bounds and consequently low relative conservativeness. 
It is worth mentioning that the relative-change trigger, although producing bounds with a similar level of relative conservativeness as the absolute-change trigger, consistently transmits more covariance matrix elements. 
This is attributed to the relative-change trigger allowing arbitrarily small absolute covariance changes to be transmitted as long as the relative change over time is sufficient, in contrast to the absolute-change trigger, which has a fixed cutoff.
Both the $N$-most-changed and combined triggers exhibit similar initial behaviors in terms of relative conservativeness. A substantial peak is observed initially, owing to the limitation on the number of elements allowed for transmission. 
Following this peak, the relative conservativeness of the combined trigger remains mostly constant, whereas the $N$-most-changed trigger displays more variability in relative conservativeness due to the fixed number of elements allowed for transmission. 
This behavior contrasts with that of the absolute- and relative-change triggers, which primarily show variation in the number of transmitted elements rather than relative conservativeness.
As anticipated, lower threshold parameter values $T$ and larger values of $N$ result in increased data transmission and lower relative conservativeness.

\subsubsection*{Quantitative Results}
In Fig.~\ref{fig:all_sequences}, a statistical summary encompassing the median, first and third quartile, and $1.5\times$-interquartile range for both the relative conservativeness and the number of transmitted covariance elements, as observed in the second set of experiments is presented. 
These results give notable insights into the characteristics of the considered triggers at various parameter values. 
The absolute-change trigger demonstrates median data reduction ratios ranging from $33\%$ ($T=3\cdot10^{-5}$) to $80\%$ ($T=3\cdot10^{-4}$). 
Simultaneously, relative conservativeness values are close to zero, translating to a median increase in the bound trace between $0.1\%$ ($T=3\cdot10^{-5}$) and $1.9\%$ ($T=3\cdot10^{-4}$) compared to the original covariance matrix trace. 
At similar median relative conservativeness levels, the relative-change trigger achieves median data reduction ratios between $20\%$ ($T=10^{-3}$) and $53\%$ ($T=1.5\cdot10^{-2}$) for the considered threshold values, confirming the comparatively inferior performance in data reduction from the qualitative experiments. 
Therefore, the main advantage of the relative-change trigger is not its performance in terms of data reduction but the ease of choice of the threshold parameter independently from the scale of the expected covariance matrix sequences.
For the $N$-most-changed trigger, data reduction ratios are directly determined by the parameter $N$ and range from $33\%$ ($N=10$) to $73\%$ ($N=4$). 
The relative conservativeness results exhibit a median bound trace increase between $0.1\%$ ($N=10$) and $1.1\%$ ($N=4$) compared to the original covariance matrix trace, albeit with notable outliers. 
This phenomenon is attributed to the restricted number of transmitted elements during periods of rapid covariance matrix changes, as also evidenced in the qualitative results in Fig.~\ref{fig:single_sequence}.
The combined trigger yields relative conservativeness levels indicating a median bound trace increase between $0.3\%$ ($T=3\cdot10^{-5}$) and $2.2\%$ ($T=3\cdot10^{-4}$) compared to the original covariance matrix trace. 
Similar to the $N$-most-changed trigger and due to the same reasons, outliers are present, contributing to a larger spread in relative conservativeness results. 
The median data reduction ratio, consistently above $53\%$ for all threshold choices due to the specified $N=7$, ranges between $53\%$ ($T=3\cdot10^{-5}$) and $80\%$ ($T=3\cdot10^{-4}$).
The overarching observation from Fig.~\ref{fig:all_sequences} is that increased data transmission correlates with decreased relative conservativeness and vice versa. 
The above quantitative findings underscore that, on average and with the considered dataset, the proposed event-triggered covariance matrix transmission approach achieves practical data reduction while maintaining tight bounds.

\subsection{Learning Thresholds from Application-Specific Data}\label{sec:optimal_threshs_results}
\begin{figure}
    \centering
    \includegraphics{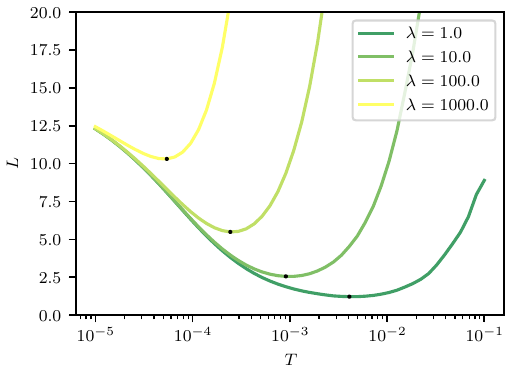}
    \caption{Training loss of absolute-change trigger for varying tradeoff parameter $\lambda\in[1,1000]$ and thresholds $\mat T=T\mat 1$, where $T$ is the shared threshold parameter. Black dots indicate the position of the minimum along the respective curve.\label{fig:uniform_train_loss}}
\end{figure}
\begin{figure*}
    \centering
    \includegraphics{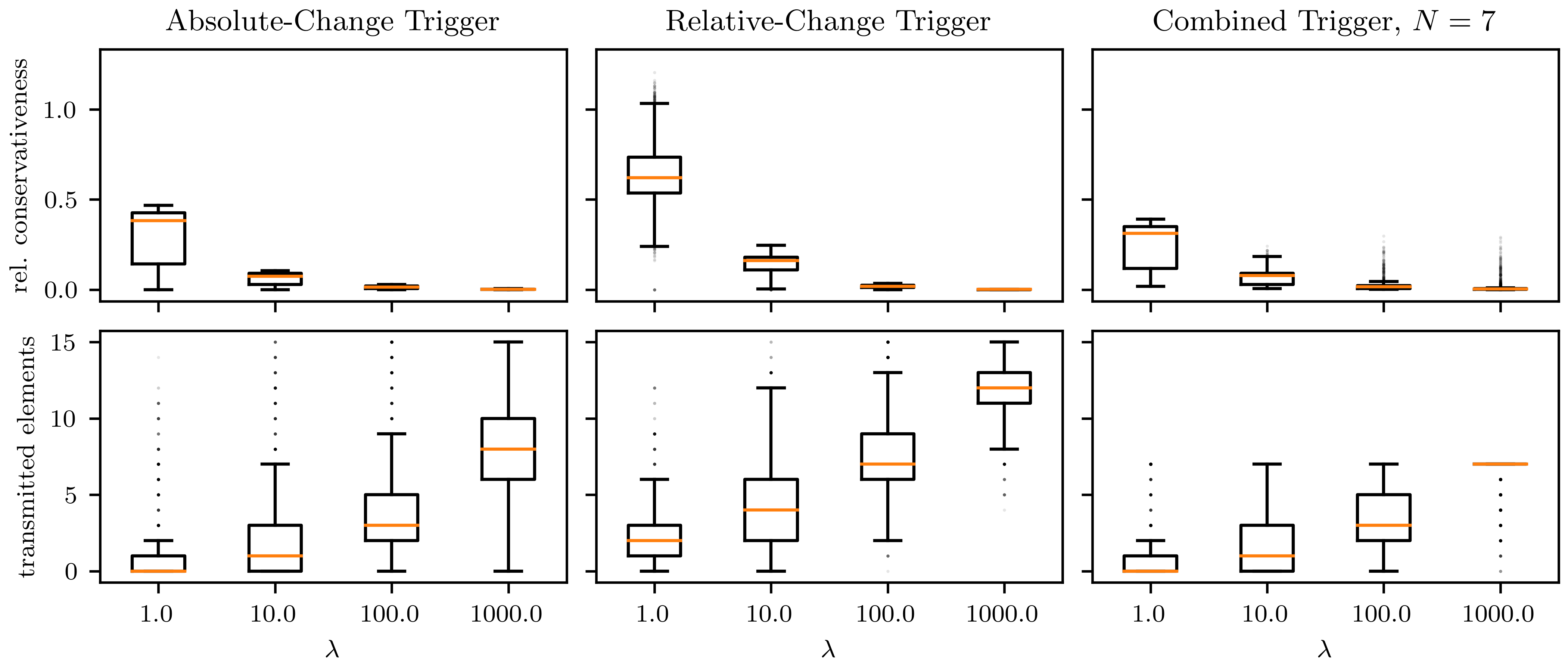}
    \caption{Summary statistics of relative conservativeness and number of transmitted elements of the absolute-change, relative-change, and combined absolute-change and (absolute-deviation-based) $N$-most-changed trigger when applied to all test set covariance matrix sequences from the InD-EKF test dataset with shared threshold $T$ optimized for different tradeoff parameters $\lambda$.\label{fig:uniform_thresh_nums_cons}}
\end{figure*}

In order to learn thresholds $\mat T=T\mat 1$ for the absolute-change, relative-change, and combined (absolute-change + absolute-deviation-based $N$-most changed trigger) triggers, where $T$ is a scalar value, in following set of experiments, the objective function \eqref{eq:learning_objective} was minimized on the InD-EKF training dataset using a grid search and varying values of the tradeoff parameter $\lambda$. 
Specifically, the tradeoff values $\lambda\in\{1,10,100,1000\}$ were considered, where $\lambda=0$ corresponds to an exclusive focus on minimizing the average number of transmitted elements and $\lambda=\infty$ corresponds to exclusive minimization of the average relative conservativeness.

Fig.~\ref{fig:uniform_train_loss} illustrates the threshold-dependent behavior of the objective function when the absolute-change trigger is employed on the InD-EKF training dataset with varying tradeoff parameters. 
As anticipated, larger values of the tradeoff parameter lead to smaller optimal thresholds. 
The objective function exhibits an unbounded increase as $T\rightarrow\infty$ due to the nature of the upper bound trace computation, where the thresholds of not transmitted elements enter linearly and larger thresholds eventually result in the transmission of no elements.
For $T\rightarrow 0$ the objective function approaches $15$, since for diminishing values of $T$, all $15$ elements in the upper triangle of the covariance matrices from the InD-EKF dataset (EKF state dimension is $5$) are transmitted, while the bound converges to the original covariance matrix.

The median, first and third quantile, as well as the $1.5\times$-interquartile range of the relative conservativeness and number of transmitted covariance matrix elements that result from using the optimal thresholds for different tradeoff-parameters on the InD-EKF test dataset are shown in Fig.~\ref{fig:uniform_thresh_nums_cons}.
Clearly, larger tradeoff parameters lead to more data being transmitted and lower relative conservativeness and vice versa.
The results also show that even when the thresholds are chosen optimally, the relative-change trigger not only transmits more covariance matrix elements but also exhibits slightly larger relative conservativeness than the absolute-change trigger.
The combined trigger exhibits behavior akin to the absolute-change trigger, with the differentiating factor being the number of transmitted elements being limited to $N=7$. 
This constraint contributes to an expanded range in relative conservativeness, particularly noticeable for larger values of~$\lambda$.

\section{Conclusion}
In sensor fusion systems, the combination of estimates and their corresponding covariance matrices necessitates the transmission of both, from the individual estimators producing them to a fusion center or other estimators performing decentralized fusion.
Notably, covariance matrices are the primary cause of data volume in such transmissions and consequently methodologies for the efficient transmission of these matrices are essential.
In safety-critical settings it is additionally of great importance that the involved covariance matrices are conservative, i.e., do not underestimate the uncertainty of their associated estimates, before and after transmission.
This work, which rigorously formalizes and generalizes \cite{funk_conservative_2023}, proposes an elementwise event-triggered approach for the transmission of covariance matrix sequences that is conservative and effective in terms of data reduction, while avoiding excessive conservativeness and computational expense.
In particular, in this work the method first introduced in \cite{funk_conservative_2023} is rederived in a more general setting that allows, e.g., for per-element trigger thresholds, per-element choice of trigger, and per-element combination of multiple triggers.
This provides great additional flexibility to adapt the method to the requirements (e.g., per-element accuracy specifications) of the application at hand.
In addition, this work introduces a new trigger whose trigger decisions are invariant to the scale of the covariance matrix sequences to be transmitted, a desirable property, and an approach to automatically learn optimal thresholds from a dataset of covariance matrix sequences for a given application.
Compared to \cite{funk_conservative_2023} experiments with covariance sequences based on real-world data are performed.
The experimental results underscore the quantitative performance of the proposed methodology, revealing an impressive $80\%$ median reduction in data volume, with a marginal $1.9\%$ median increase in the trace, here used as a measure of conservativeness, of the transmitted covariance matrix.
The experiments also answer questions of feasibility and efficacy of learning trigger thresholds, when a dataset of covariance matrix trajectories from an intended application is available, affirmatively.
Future work explores learning approaches for per-element thresholds, examining their potential advantages over uniform thresholds, especially when accuracies are not predetermined by specific requirements but are instead chosen to optimize the tradeoff between data reduction and conservativeness.
Tighter approximations of the cone of positive semidefinite than that of diagonal dominance used here are also an interesting research prospect, as they would directly improve the conservativeness of the method.
Finally, elementwise event-triggered covariance matrix transmission can be combined with event-based transmission of estimates.

\appendix
\subsection{Proof of Lemma~\ref{thm:dd_implies_psd}}\label{sec:dd_implies_psd}
\begin{theorem}[Gershgorin Discs]
    \label{thm:gerhsgorin}
    The union $\bigcup_{i=1}^n \set D_i$ of so-called \normalfont{Gershgorin discs} $\set D_i=\{\lambda\in\mathbb C||\lambda-[\mat A]_{ii}|\leq \sum_{j\neq i} |[\mat A]_{ij}|\}$ contains all eigenvalues of $\mat A\in\mathbb R^{n\times n}$ \cite{horn_matrix_2012}.
\end{theorem}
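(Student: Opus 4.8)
The plan is to prove this classical eigenvalue localization result by the standard ``largest-component'' argument. First I would fix an arbitrary eigenvalue $\lambda$ of $\mat A$ and pick a corresponding eigenvector $\vec v\neq\vec 0$, so that $\mat A\vec v=\lambda\vec v$. The crucial step is to single out an index $i\in\{1,\ldots,n\}$ at which the eigenvector attains its maximum modulus, i.e., $|[\vec v]_i|\geq|[\vec v]_j|$ for all $j=1,\ldots,n$; since $\vec v$ is nonzero, $|[\vec v]_i|>0$.

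Next I would write out the $i$-th component of the identity $\mat A\vec v=\lambda\vec v$, namely $\sum_{j=1}^n[\mat A]_{ij}[\vec v]_j=\lambda[\vec v]_i$, and move the diagonal term to the other side to obtain $(\lambda-[\mat A]_{ii})[\vec v]_i=\sum_{j\neq i}[\mat A]_{ij}[\vec v]_j$. Taking absolute values, applying the triangle inequality, and then bounding each $|[\vec v]_j|$ by $|[\vec v]_i|$ gives $|\lambda-[\mat A]_{ii}|\,|[\vec v]_i|\leq\sum_{j\neq i}|[\mat A]_{ij}|\,|[\vec v]_j|\leq\left(\sum_{j\neq i}|[\mat A]_{ij}|\right)|[\vec v]_i|$. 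Dividing through by $|[\vec v]_i|>0$ yields $|\lambda-[\mat A]_{ii}|\leq\sum_{j\neq i}|[\mat A]_{ij}|$, which is exactly the statement $\lambda\in\set D_i\subseteq\bigcup_{l=1}^n\set D_l$.

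Since $\lambda$ was an arbitrary eigenvalue, this shows that $\bigcup_{i=1}^n\set D_i$ contains the whole spectrum of $\mat A$, as claimed. There is essentially no obstacle here; the only point that requires a moment's care is the legitimacy of dividing by $|[\vec v]_i|$, which is precisely why $i$ must be chosen as an index of maximal eigenvector modulus (combined with the fact that an eigenvector is by definition nonzero). For a textbook treatment one may simply invoke \cite{horn_matrix_2012}; the self-contained argument above is included for completeness.
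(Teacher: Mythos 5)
Your argument is correct and complete: choosing the index of maximal eigenvector modulus, isolating the diagonal term in the $i$-th row of $\mat A\vec v=\lambda\vec v$, and applying the triangle inequality before dividing by $|[\vec v]_i|>0$ is exactly the classical proof. The paper itself does not prove this theorem but only cites \cite{horn_matrix_2012}; your write-up is precisely the standard argument from that reference, so there is nothing to compare beyond noting that you have supplied the proof the paper chose to omit.
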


Using the above theorem the lemma can be proven as below.

\begin{proof}
    Since $\mat A$ is symmetric its eigenvalues are located on the real line.
    Furthermore, it follows from Theorem~\ref{thm:gerhsgorin} that they must be contained within the Gershgorin discs of $\mat A$.
    Each disc gives a lower bound $[\mat A]_{ii}-\sum_{j\neq i}|[\mat A]_{ij}|\leq \lambda$ on possible eigenvalues $\lambda\in\mathbb R$.
    Because of Definition~\ref{def:diagonal_dominance}, $[\mat A]_{ii}-\sum_{j\neq i}|[\mat A]_{ij}|\geq 0$ and consequently $\lambda\geq 0$ hold.
    Thus, all of $\mat A$'s eigenvalues are non-negative, i.e., $\mat A\succeq\mat 0$, as claimed.
\end{proof}

\subsection{Proof of Lemma~\ref{thm:feasible_sets_identical}}\label{sec:feasible_sets_identical}
\begin{proof}
    Let $\mat S_k\in\set A$ and choose $[\mat\Delta]_{ii}=-[\hat{\mat\Delta}_k]_{ii}$ for $i=1,\ldots,n$, $[\mat\Delta]_{ij}=\sgn([\mat S_k]_{ij})[\hat{\mat\Delta}_k]_{ij}$ for $i\neq j$. 
    Note that this choice of $\mat\Delta$ is in $\set F_k$. 
    From the above and the definition of $\set A$ $$-[\hat{\mat\Delta}_k]_{ii}+[\mat S_k]_{ii}\geq \sum_{\substack{j=1\\j\neq i}}^n[\hat{\mat\Delta}_k]_{ij}+\sum_{\substack{j=1\\j\neq i}}^n|[\mat S_k]_{ij}|$$ follows for $i=1,\ldots,n$.
    Hence, $\mat S_k\in\set B$ and $\set A\subseteq\set B$ hold.

    Let $\mat S_k\in\set B$ and choose an arbitrary $\mat\Delta\in\set F_k$. 
    From the definition of $\set F_k$ it follows that $[\hat{\mat\Delta}_k]_{ii}\geq-[\mat\Delta]_{ii}$ for $i=1,\ldots,n$.
    The triangle inequality and definition of $\set F_k$ yield $$|[\mat\Delta]_{ij}+[\mat S_k]_{ij}|\leq|[\mat\Delta]_{ij}|+|[\mat S_k]_{ij}|\leq[\hat{\mat\Delta}_k]_{ij}+|[\mat S_k]_{ij}|$$ for $j\neq i$.
    Lower bounding the defining inequalities of $\set B$ using the above inequalities gives $$[\mat S_k]_{ii}\geq-[\mat\Delta]_{ii}+\sum_{j\neq i}|[\mat\Delta]_{ij}+[\mat S_k]_{ij}|,$$
    for $i=1,\ldots,n$, which are exactly the defining inequalities of $\set A$.
    Since $\mat\Delta\in\set F_k$ was arbitrary, $\mat S_k\in\set A$ and $\set B\subseteq\set A$ hold.    
\end{proof}

\subsection{Proof of Theorem~\ref{thm:bound_lp_closed_form_solution}}\label{sec:bound_lp_closed_form_solution}
\begin{proof}
    LP \eqref{eq:bound_lp} may be viewed as a nested optimization problem, where one optimizes the diagonal elements $[\mat S_k]_{ii}$ of $\mat S_k$ first, holding the off-diagonal elements $[\mat S_k]_{ij}$, $i\neq j$, fixed, followed by optimizing the off-diagonal elements.
    The $[\mat S_k]_{ii}$ are bounded below by the inequality constraints.
    As they appear as $\sum_{i=1}^n[\mat S_k]_{ii}$ in the objective, the objective is minimized with respect to them if they attain their respective lower bounds.
    Therefore, the objective with the optimal values $[\mat S_k^*]_{ii}=\sum_{j=1}^n[\hat{\mat\Delta}_k]_{ij}+\sum_{\substack{j=1\\j\neq i}}^n|[\mat S_k]_{ij}|$ substituted for the $[\mat S_k]_{ii}$ is given by $\sum_{i=1}^n\left([\tilde{\mat P}_k]_{ii}+[\hat{\mat\Delta}_k]_{ii}+\sum_{\substack{j=1\\j\neq i}}^n|[\mat S_k]_{ij}|\right)$.
    The $[\mat S_k]_{ij}$, $i\neq j$, can now be seen to minimize the objective if they are zero.
    In summary, this implies that $[\mat S_k^*]_{ij}=0$, $i\neq j$, and $[\mat S_k^*]_{ii}=\sum_{j=1}^n[\hat{\mat\Delta}_k]_{ij}$ as claimed in the theorem.
\end{proof}

\subsection{Proof of Theorem~\ref{thm:bound_error_bound}}\label{sec:bound_error_bound}
\begin{proof}
    By the definitions of the bound and the buffer error $\|\mat W\odot(\hat{\mat P}_k-\mat P_k)\|_F=\|\mat W\odot(\mat\Delta_k+\mat S_k^*)\|_F$ holds.
    The latter is given by
    \begin{align*}
        &\|\mat W\odot(\mat\Delta_k+\mat S_k^*)\|_F^2 = \sum_{i,j=1}^n[\mat\Delta_k+\mat S_k^*]_{ij}^2[\mat W]_{ij}^2\\
        &= \sum_{i=1}^n[\mat\Delta_k+\mat S_k^*]_{ii}^2[\mat W]_{ii}^2+\sum_{\substack{i,j=1\\i\neq j}}^n[\mat\Delta_k+\mat S_k^*]_{ij}^2[\mat W]_{ij}^2
    \end{align*}
    Using the definitions of $\set F_k$ and $\mat S_k^*$ yields
    \begin{align*}
        \|\mat W\odot(\mat\Delta_k+\mat S_k^*)\|_F^2 &\leq \sum_{i=1}^n\left([\mat\Delta_k]_{ii}+\sum_{j=1}^n[\hat{\mat\Delta}_k]_{ij}\right)^2[\mat W]_{ii}^2\\
        &+\sum_{\substack{i,j=1\\i\neq j}}^n[\hat{\mat\Delta}_k]_{ij}^2[\mat W]_{ij}^2,
    \end{align*}
    where, due to $|[\mat\Delta_k]_{ii}|\leq[\hat{\mat\Delta}_k]_{ii}$ and $\hat{\mat\Delta}_k\geq\mat0$ the contents of the bracket are non-negative for $i=1,\ldots,n$.
    This implies $\left([\mat\Delta_k]_{ii}+\sum_{j=1}^n[\hat{\mat\Delta}_k]_{ij}\right)^2\leq\left([\hat{\mat\Delta}_k]_{ii}+\sum_{j=1}^n[\hat{\mat\Delta}_k]_{ij}\right)^2$.
    Written more compactly this results in the claimed bound.
\end{proof}

\subsection{Proof of Theorem~\ref{thm:absolute_change_bounds}}\label{sec:absolute_change_bounds}
\begin{proof}
    Each entry $[\mat P_k]_{ij}$ of $\mat P_k$ that was sent/received results in the corresponding $[\mat \Delta_k]_{ij}$ being zero, i.e., $|[\mat\Delta_k]_{ij}|$ is bounded by $[\hat{\mat\Delta}_k]_{ij}=0$.
    Each $[\mat P_k]_{ij}$ that was not sent satisfies
    \begin{equation*}
        \left|[\mat\Delta_k]_{ij}\right|=\left|[\mat P_k]_{ij}-[\tilde{\mat P}_k]_{ij}\right|=\left|[\mat P_k]_{ij}-[\tilde{\mat P}_{k-1}]_{ij}\right|\leq [\mat T]_{ij}
    \end{equation*}
    due to $[\tilde{\mat P}_k]_{ij}=[\tilde{\mat P}_{k-1}]_{ij}$ and the absolute-change trigger condition $|[\mat P_k]_{ij}-[\tilde{\mat P}_{k-1}]_{ij}|>[\mat T]_{ij}$.
\end{proof}

\subsection{Proof of Theorem~\ref{thm:relative_change_bounds}}\label{sec:relative_change_bounds}
\begin{proof}
    Each entry $[\mat P_k]_{ij}$ of $\mat P_k$ that was sent/received results in the corresponding $[\mat \Delta_k]_{ij}$ being zero, i.e., $|[\mat\Delta_k]_{ij}|$ is bounded by $[\hat{\mat\Delta}_k]_{ij}=0$.
    Each $[\mat P_k]_{ij}$ that was not sent satisfies
    \begin{align*}
        \left|[\mat\Delta_k]_{ij}\right|&=\left|[\mat P_k]_{ij}-[\tilde{\mat P}_k]_{ij}\right|=\left|[\mat P_k]_{ij}-[\tilde{\mat P}_{k-1}]_{ij}\right|\\
        &\leq [\mat T]_{ij}|[\tilde{\mat P}_{k-1}]_{ij}|=[\mat T]_{ij}|[\tilde{\mat P}_k]_{ij}|
    \end{align*}
    due to $[\tilde{\mat P}_k]_{ij}=[\tilde{\mat P}_{k-1}]_{ij}$ and the relative-change trigger condition $|[\mat P_k]_{ij}-[\tilde{\mat P}_{k-1}]_{ij}|>[\mat T]_{ij}|[\tilde{\mat P}_{k-1}]_{ij}|$.
\end{proof}

\subsection{Proof of Theorem~\ref{thm:most_changed_bounds}}\label{sec:most_changed_bounds}
\begin{proof}
    Each entry $[\mat P_k]_{ij}$ of $\mat P_k$ that was sent/received results in the corresponding $[\mat\Delta_k]_{ij}$ being zero, i.e., $|[\mat\Delta_k]_{ij}|$ is bounded by $[\hat{\mat\Delta}_k]_{ij}=0$. 
    The $[\mat P_k]_{ij}$ of the upper triangle that were not received can not have been among the $N$ elements of the upper triangle with the largest absolute buffer deviation $|[\mat P_k]_{ij}-[\tilde{\mat P}_{k-1}]_{ij}|$ (or relative buffer deviation $|[\mat P_k]_{ij}-[\tilde{\mat P}_{k-1}]_{ij}|/|[\tilde{\mat P}_{k-1}]_{ij}|$).
    Therefore, their absolute (or relative) buffer deviation must have been less than that of the upper triangle elements that were received, i.e.,
    \begin{equation*}
        |[\mat P_k]_{ij}-[\tilde{\mat P}_{k-1}]_{ij}|\leq \min_{((l,m),\cdot)\in\set E_k'} |[\mat P_k]_{lm}-[\tilde{\mat P}_{k-1}]_{lm}|
    \end{equation*}
    if the absolute deviation was used or
    \begin{equation*}
        \frac{|[\mat P_k]_{ij}-[\tilde{\mat P}_{k-1}]_{ij}|}{|[\tilde{\mat P}_{k-1}]_{ij}|}\leq \min_{((l,m),\cdot)\in\set E_k'} \frac{|[\mat P_k]_{lm}-[\tilde{\mat P}_{k-1}]_{lm}|}{|[\tilde{\mat P}_{k-1}]_{lm}|}
    \end{equation*}
    if the relative deviation was used, for each $((i,j),\cdot)\notin\set E_k'$.
    Note that here $\set E_k'$ can be replaced by $\set E_k$ due to symmetry of $\mat P_k$ and $\tilde{\mat P}_{k-1}$.
    In addition, $[\tilde{\mat P}_k]_{lm}=[\mat P_k]_{lm}$ for all $((l,m),\cdot)\in\set E_k$.
    This implies that if the absolute deviation was used
    \begin{equation*}
        |[\mat P_k]_{lm}-[\tilde{\mat P}_{k-1}]_{lm}|=|[\tilde{\mat P}_k]_{lm}-[\tilde{\mat P}_{k-1}]_{lm}|
    \end{equation*}
    and if the relative deviation was used
    \begin{equation*}
        \frac{|[\mat P_k]_{lm}-[\tilde{\mat P}_{k-1}]_{lm}|}{|[\tilde{\mat P}_{k-1}]_{lm}|}=\frac{|[\tilde{\mat P}_k]_{lm}-[\tilde{\mat P}_{k-1}]_{lm}|}{|[\tilde{\mat P}_{k-1}]_{lm}|}
    \end{equation*}
    hold for $((l,m),\cdot)\in\set E_k$.
    Thus, the absolute buffer error for a not received element $((i,j),[\mat P_k]_{ij})\notin\set E_k$ is 
    \begin{align*}
        |[\mat\Delta_k]_{ij}|&=|[\mat P_k]_{ij}-[\tilde{\mat P}_k]_{ij}| = |[\mat P_k]_{ij}-[\tilde{\mat P}_{k-1}]_{ij}| \\
        &\leq \min_{((l,m),\cdot)\in\set E_k} |[\tilde{\mat P}_k]_{lm}-[\tilde{\mat P}_{k-1}]_{lm}|,
    \end{align*}
    and the relative buffer error for such an element is
    \begin{align*}
        \frac{|[\mat\Delta_k]_{ij}|}{|[\tilde{\mat P}_k]_{ij}|} &= \frac{|[\mat P_k]_{ij}-[\tilde{\mat P}_k]_{ij}|}{|[\tilde{\mat P}_k]_{ij}|} = \frac{|[\mat P_k]_{ij}-[\tilde{\mat P}_{k-1}]_{ij}|}{|[\tilde{\mat P}_{k-1}]_{ij}|} \\
        &\leq \min_{((l,m),\cdot)\in\set E_k} \frac{|[\tilde{\mat P}_k]_{lm}-[\tilde{\mat P}_{k-1}]_{lm}|}{|[\tilde{\mat P}_{k-1}]_{lm}|},
    \end{align*}
    where the above observations and the fact that $[\tilde{\mat P}_k]_{ij}=[\tilde{\mat P}_{k-1}]_{ij}$ for a not received element $[\mat P_k]_{ij}$ were used.
\end{proof}

\subsection{Proof of Theorem~\ref{thm:subset_most_changed_bounds}}\label{sec:subset_most_changed_bounds}
\begin{proof}
    The proof of Theorem~\ref{thm:most_changed_bounds} can be reused almost entirely. 
    It is only necessary to realize that the not received elements $[\mat P_k]_{ij}$ cannot have been among the $N$ elements of the upper triangle with the largest absolute (or relative) buffer deviation \textit{within the considered subset of elements} defined by $\set G$.
    Therefore, their absolute (or relative) buffer deviation must have been less than that of the upper triangle elements in said subset that were received, i.e., 
    \begin{equation*}
        |[\mat P_k]_{ij}-[\tilde{\mat P}_{k-1}]_{ij}|\leq \min_{\substack{((l,m),\cdot)\in\set E_k'\\(l,m)\in\set G}} |[\mat P_k]_{lm}-[\tilde{\mat P}_{k-1}]_{lm}|
    \end{equation*}
    if the absolute deviation was used or
    \begin{equation*}
        \frac{|[\mat P_k]_{ij}-[\tilde{\mat P}_{k-1}]_{ij}|}{|[\tilde{\mat P}_{k-1}]_{ij}|}\leq \min_{\substack{((l,m),\cdot)\in\set E_k'\\(l,m)\in\set G}} \frac{|[\mat P_k]_{lm}-[\tilde{\mat P}_{k-1}]_{lm}|}{|[\tilde{\mat P}_{k-1}]_{lm}|}
    \end{equation*}
    if the relative deviation was used.
    The remainder of the proof follows exactly the same pattern as that of Theorem~\ref{thm:most_changed_bounds}.
\end{proof}

\subsection{Proof of Theorem~\ref{thm:combined_bounds}}\label{sec:combined_bounds}
\begin{proof}
    If $((i,j),\cdot)\in\set E_k$ then $[\mat P_k]_{ij}$ was received so the buffer error is $[\mat\Delta_k]_{ij}=0$ and $[\hat{\mat\Delta}_k]_{ij}=0$ is an upper bound for its absolute value.
    If $((i,j),\cdot)\notin\set E_k$ then $[\mat P_k]_{ij}$ must have not satisfied the triggering condition of at least one of the triggers.
    This implies that the possible buffer errors are bounded by at least one of the bounds associated with the triggers whose conditions were not satisfied.
    The maximum of all trigger bounds includes the bounds of the triggers whose conditions were not satisfied and is thus guaranteed to bound the possible buffer errors.
\end{proof}

\subsection{Proof of Theorem~\ref{thm:achange_nmost_bounds}}\label{sec:achange_nmost_bounds}
\begin{proof}
    Assume that $|\set E_k'|<N$, then all elements of $\mat P_k$ that satisfy the absolute-change criterion are included in $\set E_k$. 
    Therefore, the same bounds as for the absolute-change trigger apply.
    Now assume that $|\set E_k'|=N$, then some elements that satisfy the absolute-change criterion may be excluded from $\set E_k$ and moreover it is unknown which elements that may be.
    Therefore, the best that can be done is to bound the possible buffer errors of the elements not in $\set E_k$ using the same approach as in the most changed trigger.
\end{proof}

\bibliographystyle{IEEEtran}
\bibliography{refs_new}

\begin{thebibliography}{10}
\providecommand{\url}[1]{#1}
\csname url@samestyle\endcsname
\providecommand{\newblock}{\relax}
\providecommand{\bibinfo}[2]{#2}
\providecommand{\BIBentrySTDinterwordspacing}{\spaceskip=0pt\relax}
\providecommand{\BIBentryALTinterwordstretchfactor}{4}
\providecommand{\BIBentryALTinterwordspacing}{\spaceskip=\fontdimen2\font plus
\BIBentryALTinterwordstretchfactor\fontdimen3\font minus
  \fontdimen4\font\relax}
\providecommand{\BIBforeignlanguage}[2]{{%
\expandafter\ifx\csname l@#1\endcsname\relax
\typeout{** WARNING: IEEEtran.bst: No hyphenation pattern has been}%
\typeout{** loaded for the language `#1'. Using the pattern for}%
\typeout{** the default language instead.}%
\else
\language=\csname l@#1\endcsname
\fi
#2}}
\providecommand{\BIBdecl}{\relax}
\BIBdecl

\bibitem{forsling_consistent_2019}
R.~Forsling, Z.~Sjanic, F.~Gustafsson, and G.~Hendeby, ``Consistent
  {{Distributed Track Fusion Under Communication Constraints}},'' in \emph{2019
  22th {{International Conference}} on {{Information Fusion}} ({{FUSION}})},
  Jul. 2019, pp. 1--8.

\bibitem{funk_conservative_2020-1}
C.~Funk, B.~Noack, and U.~D. Hanebeck, ``Conservative {{Quantization}} of
  {{Fast Covariance Intersection}},'' in \emph{2020 {{IEEE International
  Conference}} on {{Multisensor Fusion}} and {{Integration}} for {{Intelligent
  Systems}} ({{MFI}})}, Sep. 2020, pp. 68--74.

\bibitem{funk_conservative_2021-1}
------, ``Conservative {{Quantization}} of {{Covariance Matrices}} with
  {{Applications}} to {{Decentralized Information Fusion}},'' \emph{Sensors},
  vol.~21, no.~9, p. 3059, Jan. 2021.

\bibitem{forsling_communication_2020}
R.~Forsling, Z.~Sjanic, F.~Gustafsson, and G.~Hendeby, ``Communication
  {{Efficient Decentralized Track Fusion Using Selective Information
  Extraction}},'' in \emph{2020 {{IEEE}} 23rd {{International Conference}} on
  {{Information Fusion}} ({{FUSION}})}, Jul. 2020, pp. 1--8.

\bibitem{forsling_optimal_2022}
------, ``Optimal {{Linear Fusion}} of {{Dimension-Reduced Estimates Using
  Eigenvalue Optimization}},'' in \emph{2022 25th {{International Conference}}
  on {{Information Fusion}} ({{FUSION}})}, Jul. 2022, pp. 1--8.

\bibitem{forsling_decentralized_2023}
R.~Forsling, F.~Gustafsson, Z.~Sjanic, and G.~Hendeby, ``Decentralized {{Data
  Fusion}} of {{Dimension-Reduced Estimates Using Local Information Only}},''
  in \emph{2023 {{IEEE Aerospace Conference}}}, Mar. 2023, pp. 1--10.

\bibitem{funk_conservative_2023}
C.~Funk and B.~Noack, ``Conservative {{Data Reduction}} for {{Covariance
  Matrices Using Elementwise Event Triggers}},'' in \emph{2023 26th
  {{International Conference}} on {{Information Fusion}} ({{FUSION}})}.\hskip
  1em plus 0.5em minus 0.4em\relax {Charleston, SC, USA}: {IEEE}, Jun. 2023,
  pp. 1--6.

\bibitem{battistelli_distributed_2018}
G.~Battistelli, L.~Chisci, and D.~Selvi, ``A distributed {{Kalman}} filter with
  event-triggered communication and guaranteed stability,'' \emph{Automatica},
  vol.~93, pp. 75--82, Jul. 2018.

\bibitem{sijs_event_2012}
J.~Sijs and M.~Lazar, ``Event {{Based State Estimation With Time Synchronous
  Updates}},'' \emph{IEEE Transactions on Automatic Control}, vol.~57, no.~10,
  pp. 2650--2655, Oct. 2012.

\bibitem{sijs_event-based_2013}
J.~Sijs, B.~Noack, and U.~D. Hanebeck, ``Event-based state estimation with
  negative information,'' in \emph{Proceedings of the 16th {{International
  Conference}} on {{Information Fusion}}}, Jul. 2013, pp. 2192--2199.

\bibitem{han_stochastic_2015}
D.~Han, Y.~Mo, J.~Wu, S.~Weerakkody, B.~Sinopoli, and L.~Shi, ``Stochastic
  {{Event-Triggered Sensor Schedule}} for {{Remote State Estimation}},''
  \emph{IEEE Transactions on Automatic Control}, vol.~60, no.~10, pp.
  2661--2675, Oct. 2015.

\bibitem{noack_state_2020}
B.~Noack, C.~Funk, S.~Radtke, and U.~D. Hanebeck, ``State {{Estimation}} with
  {{Event-Based Inputs Using Stochastic Triggers}},'' \emph{IFAC-PapersOnLine},
  vol.~53, no.~2, pp. 2324--2329, Jan. 2020.

\bibitem{ben-tal_robust_2002}
A.~{Ben-Tal} and A.~Nemirovski, ``Robust optimization \textendash{} methodology
  and applications,'' \emph{Mathematical Programming}, vol.~92, no.~3, pp.
  453--480, May 2002.

\bibitem{ahmadi_dsos_2019}
A.~A. Ahmadi and A.~Majumdar, ``{{DSOS}} and {{SDSOS Optimization}}: {{More
  Tractable Alternatives}} to {{Sum}} of {{Squares}} and {{Semidefinite
  Optimization}},'' \emph{SIAM Journal on Applied Algebra and Geometry},
  vol.~3, no.~2, pp. 193--230, Jan. 2019.

\bibitem{bertsimas_theory_2010}
D.~Bertsimas, D.~B. Brown, and C.~Caramanis, ``Theory and {{Applications}} of
  {{Robust Optimization}},'' Oct. 2010.

\bibitem{bock_ind_2020}
J.~Bock, R.~Krajewski, T.~Moers, S.~Runde, L.~Vater, and L.~Eckstein, ``The
  {{inD Dataset}}: {{A Drone Dataset}} of {{Naturalistic Road User
  Trajectories}} at {{German Intersections}},'' in \emph{2020 {{IEEE
  Intelligent Vehicles Symposium}} ({{IV}})}.\hskip 1em plus 0.5em minus
  0.4em\relax {Las Vegas, NV, USA}: {IEEE}, Oct. 2020, pp. 1929--1934.

\bibitem{horn_matrix_2012}
R.~A. Horn and C.~R. Johnson, \emph{Matrix Analysis}, 2nd~ed.\hskip 1em plus
  0.5em minus 0.4em\relax {Cambridge University Press}, 2012.

\end{thebibliography}

\begin{IEEEbiography}[{\includegraphics[width=1in,height=1.25in,clip,keepaspectratio]{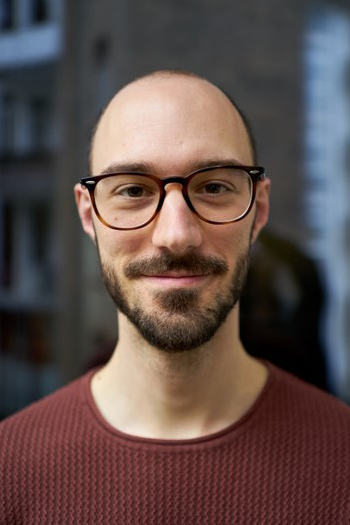}}]{Christopher Funk} (Student Member, IEEE) received the M.Sc. degree in electrical engineering and information technology from the Karlsruhe Institute of Technology (KIT), Karlsruhe, Germany, in 2019. He is currently working towards the Ph.D. degree in computer science with the Autonomous Multisensor Systems (AMS) group, Otto von Guericke University,  Magdeburg, Germany. His research interests include multisensor data fusion, data compression for decentralized multisensor data fusion, and optimization-based methods in multi-object tracking.
\end{IEEEbiography}
\begin{IEEEbiography}[{\includegraphics[width=1in,height=1.25in,clip,keepaspectratio]{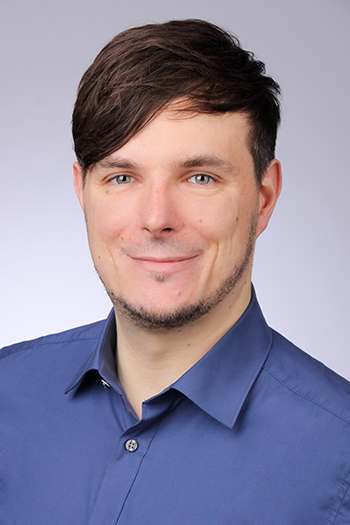}}]{Benjamin Noack} (Member, IEEE) received the diploma in computer science from the Karlsruhe Institute of Technology (KIT), Karlsruhe, Germany, in 2009, and the Ph.D. degree in computer science from the Intelligent Sensor-Actuator-Systems Laboratory, KIT, in 2013. He is currently a Professor of Computer Science with the Otto von Guericke University Magdeburg, Magdeburg, Germany, and the Head of the Autonomous Multisensor Systems (AMS) Group. His research interests include multisensor data fusion, distributed and decentralized Kalman filtering, combined stochastic and set-membership approaches to state estimation, and event-based systems.
\end{IEEEbiography}

\end{document}